\begin{document}

\title{Relation matroid and its relationship with generalized rough set based on relation}         

\author{Yanfang Liu, William Zhu~\thanks{Corresponding author.
E-mail: williamfengzhu@gmail.com (William Zhu)}}
\institute{
Lab of Granular Computing,\\
Zhangzhou Normal University, Zhangzhou 363000, China}


\date{\today}          
\maketitle

\begin{abstract}
Recently, the relationship between matroids and generalized rough sets based on relations has been studied from the viewpoint of linear independence of matrices.
In this paper, we reveal more relationships by the predecessor and successor neighborhoods from relations.
First, through these two neighborhoods, we propose a pair of matroids, namely predecessor relation matroid and successor relation matroid, respectively.
Basic characteristics of this pair of matroids, such as dependent sets, circuits, the rank function and the closure operator, are described by the predecessor and successor neighborhoods from relations.
Second, we induce a relation from a matroid through the circuits of the matroid.
We prove that the induced relation is always an equivalence relation.
With these two inductions, a relation induces a relation matroid, and the relation matroid induces an equivalence relation, then the connection between the original relation and the induced equivalence relation is studied.
Moreover, the relationships between the upper approximation operator in generalized rough sets and the closure operator in matroids are investigated.

\textbf{Keywords:}
Rough set; Matroid; Relation matroid; Neighborhood.
\end{abstract}

\section{Introduction}
Rough set theory was proposed by Pawlak~\cite{Pawlak91Rough} as a mathematical tool to organize and analyze various types of data in data mining.
It is especially useful for dealing with uncertain and vague knowledge in information systems.
The classical rough set theory is based on equivalence relations or partitions.
And the key notions of rough set theory, i.e., the lower and upper approximations are firstly constructed through equivalence classes.
However, the notion of equivalence relations or partitions is too restrictive for many applications.
In order to meet many application requirements, various extensions of equivalence relations or partitions have been proposed, such as similarity relations~\cite{Vakarelov05AModal}, tolerance relations~\cite{YaoWangZhang10Onfuzzy}, arbitrary binary relations~\cite{Yao98Relational}, coverings~\cite{Bryniarski89ACalculus} and others~\cite{LiLeungZhang08Generalizedfuzzyrough,LiZhang08fuzzyroughapproximations,DengChenXuDai07anovalapproach}.
Therefore, the classical rough set theory has been extended to generalized rough sets based on similarity relation~\cite{SlowinskiVanderpooten00AGeneralized}, generalized rough sets based on tolerance relation~\cite{MengShi09AFast,SkowronStepaniuk96tolerance}, generalized rough sets based on binary relation~\cite{Zhu07Generalized,Kondo05OnTheStructure} and covering-based rough sets~\cite{Zhu07Basic,WangZhuZhu10Structure,Zhu06PropertiesoftheSecond,Zhu09RelationshipAmong,ZhuWang03Reduction,QinGaoPei07OnCovering,LiuSai09AComparison,FengMiWu06Covering-Based,ChenWangHu07ANew}.
In this paper, we focus on generalized rough sets based on relations.

Matroid theory was proposed by Whitney~\cite{Lai01Matroid} as a generalization of linear independence in vector spaces.
It borrowed extensively from linear algebra and graph theory, and made great progress in recent decades.
Matroids have sound theoretical foundations and wide applications.
In theory, matroids are connected with rough sets in literatures~\cite{WangZhuMin11Transversal,WangZhuMin11TheVectorially,ZhuWang11Rough,WangZhu11Matroidal,ZhuWang11Matroidal,LiuZhuZhang12Relationshipbetween,ZhangWangFengFeng11reductionofrough}. In application, matroids have been used in diverse fields, such as greedy algorithm~\cite{RoyVoxman92FuzzyMatroid}, combinatorial optimization~\cite{Lawler01Combinatorialoptimization} and network flows~\cite{Edmonds71Matroids}.
In this paper, we build the connection between generalized rough sets based on relations and matroids through the predecessor and successor neighborhoods.
The main contributions of this work are 2-fold.

On the one hand, neighborhood is an important concept in generalized rough sets based on relations.
For an arbitrary binary relation, the predecessor and successor neighborhoods can be obtained.
Through these two neighborhoods, we propose a pair of matroids called predecessor relation matroid and successor relation matroid, respectively.
This pair of matroids can be expressed by each other, therefore we need to investigate only one of the two matroids.
In this paper, we mainly study the successor relation matroid, or relation matroid for brevity.
Basic characteristics of the relation matroid induced by a relation, such as dependent sets, circuits, the rank function and the closure operator, are described by the relation.
Moreover, the upper approximation operator of a relation in generalized rough sets based on relations is compared with the closure operator of the relation matroid induced by the relation.
Especially, the two operators are equal to each other when the relation is an equivalence relation.

On the other hand, we induce a relation from a matroid through the circuits of the matroid.
And we prove that the induced relation is always an equivalence relation.
Similarly, the closure operator of a matroid is compared with the upper approximation operator of the relation induced by the matroid.
Especially, we investigate the conditions when the closure operator of a matroid is equal to the upper approximation operator of the relation induced by the matroid.
Based on these two inductions, a relation induces a relation matroid, and the relation matroid induces an equivalence relation, we prove that the induced equivalence relation can be expressed by the original relation.
Moreover, we prove that the induced equivalence relation is equal to the original relation if and only if the original relation is an equivalence relation.

The rest of this paper is arranged as follows.
Section~\ref{S:preliminaries} reviews some definitions and results of Pawlak's rough sets, generalized rough sets based on relations and matroids.
In Section~\ref{S:matroidinducedbybinaryrelation}, we propose predecessor and successor relation matroids induced by binary relations, and study some characteristics of them through neighborhoods in generalized rough sets based on relations.
Section~\ref{S:relationinducedbymatroid} induces a relation from a matroid and investigates the relationship between the upper approximation operator and the closure operator.
And we conclude this paper in Section~\ref{S:conclusions}.

\section{Preliminaries}
\label{S:preliminaries}

In this section, we recall some basic definitions of binary relations and equivalence relations which is a special type of binary relations.
Then, we review some results of Pawlak's rough sets based on equivalence relations, generalized rough sets based on binary relations and matroids.

\subsection{Binary relation}

Let $U$ be a set and $U\times U$ the product set of $U$ and $U$.
If $R\in U\times U$, then $R$ is called a binary relation~\cite{RajagopalMason92Discrete} on $U$.
For all $(x, y)\in U\times U$, if $(x, y)\in R$, we say $x$ has relation $R$ with $y$, and denote this relationship as $xRy$.

In mathematics, the inverse relation of a binary relation is the relation that occurs when you switch the order of the elements in the relation.
Then, the inverse relation of a binary relation is introduced in the following definition.

\begin{definition}(Inverse relation\cite{RajagopalMason92Discrete})
Let $R$ be a relation on $U$.
Then,\\
\centerline{$R^{-1}=\{(y, x)\in U\times U:(x, y)\in R\}$,}
where $R^{-1}$ is the inverse relation of $R$.
\end{definition}

Throughout this paper, a binary relation is simply called a relation.
Reflective, symmetric, and transitive properties play important roles in characterizing relations.
Then, we introduce equivalence relations through these three properties.

\begin{definition}(Reflexive, symmetric and transitive~\cite{RajagopalMason92Discrete})
Let $R$ be a relation on $U$.\\
If for all $x\in U$, $xRx$, we say $R$ is reflexive.\\
If for all $x, y\in U$, $xRy$ implies $yRx$, we say $R$ is symmetric.\\
If for all $x, y, z\in U$, $xRy$ and $yRz$ imply $xRz$, we say $R$ is transitive.
\end{definition}

\begin{definition}(Equivalence relation~\cite{RajagopalMason92Discrete})
Let $R$ be a relation on $U$.
If $R$ is reflexive, symmetric and transitive, we say $R$ is an equivalence relation on $U$.
\end{definition}

\subsection{Pawlak's rough set}
\label{S:Pawlak'sroughset}

Let $U$ be a finite and nonempty set and $R$ an equivalence relation on $U$.
The equivalence relation $R$ induces a partition $U/R=\{[x]_{R}:x\in U\}$ on $U$, where $[x]_{R}=\{y\in U:xRy\}$ is the equivalence class determined by $x$ with respect to $R$.

In rough set theory~\cite{Pawlak91Rough}, the equivalence classes of $R$ are elementary sets to construct lower and upper approximations.
For any $X\subseteq U$, its lower and upper approximations are defined as follows:
\begin{center}
~$\underline{R}(X)=\{x\in U:[x]_{R}\subseteq X\}$;\\
\quad\quad $\overline{R}(X)=\{x\in U:[x]_{R}\bigcap X\neq\emptyset\}$.
\end{center}

$X^{c}$ is denoted by the complement of $X$ in $U$ and $Y\subseteq U$. We have the following properties of rough sets:\\
(1L) $\underline{R}(U)=U$;\\
(1H) $\overline{R}(U)=U$;\\
(2L) $\underline{R}(\emptyset)=\emptyset$;\\
(2H) $\overline{R}(\emptyset)=\emptyset$;\\
(3L) $\underline{R}(X)\subseteq X$;\\
(3H) $X\subseteq \overline{R}(X)$;\\
(4L) $\underline{R}(X\bigcap Y)=\underline{R}(X)\bigcap \underline{R}(Y)$;\\
(4H) $\overline{R}(X\bigcup Y)=\overline{R}(X)\bigcup \overline{R}(Y)$;\\
(5L) $\underline{R}(\underline{R}(X))=\underline{R}(X)$;\\
(5L) $\overline{R}(\overline{R}(X))=\overline{R}(X)$;\\
(6L) $X\subseteq Y\Rightarrow \underline{R}(X)\subseteq \underline{R}(X)$;\\
(6H) $X\subseteq Y\Rightarrow \overline{R}(X)\subseteq \overline{R}(X)$;\\
(7L) $\underline{R}(X^{c})=(\overline{R}(X))^{c}$;\\
(7H) $\overline{R}(X^{c})=(\underline{R}(X))^{c}$;\\
(8L) $\underline{R}((\underline{R}(X))^{c})=(\underline{R}(X))^{c}$;\\
(8H) $\overline{R}((\overline{R}(X))^{c})=(\overline{R}(X))^{c}$.

The (3L), (3H), (4L), (4H), (8L) and (8H) are characteristic properties of the lower and upper approximation operators~\cite{LinLiu94Rough,ZhuHe00TheAxiomization}, in other words, all other properties can be deduced from these properties.

\subsection{Generalized rough set based on binary relation}

Pawlak's rough sets are based on equivalence relations.
and the requirement of equivalence relations is a very restrictive condition that limit the application domain of the rough set model.
Therefore, many authors have generalized the notion of approximation operators by using more general binary relations or by employing coverings.
In this section, we introduce a type of generalized rough sets based on relations which extend Pawlak's rough sets through extending an equivalence relation to a binary relation.
First, we introduce the successor neighborhood and predecessor neighborhood of any element in a relation.

\begin{definition}(Successor neighborhood and predecessor neighborhood~\cite{ZhuWang07Topological})
Let $R$ be a relation on $U$.
For any $x\in U$,\\
\centerline{$RS_{R}(x)=\{y\in U:xRy\}$;}\\
\centerline{$RP_{R}(x)=\{y\in U:yRx\}$.}\\
We call the set $RS_{R}(x), RP_{R}(x)$ the successor neighborhood and the predecessor neighborhood of $x$ in $R$, respectively.
\end{definition}

The successor and predecessor neighborhoods are important concepts in generalized rough sets based on relations, and they are used to construct lower and upper approximation operators.
In this paper, we only use the successor neighborhood.
Therefore, the lower and upper approximation operators based on successor neighborhood are introduced as followed.

\begin{definition}(Lower and upper approximation operators~\cite{Zhu09RelationshipBetween})
\label{D:lowerandupperapproximation}
Let $R$ be a relation on $U$.
Lower and upper approximation operators $L_{R}, H_{R}:2^{U}\rightarrow 2^{U}$ are defined as follows:
for all $X\subseteq U$,
\begin{center}
$L_{R}(X)=\{x\in U:RS_{R}(x)\subseteq X\}$;\\
~~~~~$H_{R}(X)=\{x\in U:RS_{R}(x)\bigcap X\neq\emptyset\}$.
\end{center}
\end{definition}

Because of the duality, only the properties of the upper approximation operators are presented.

\begin{proposition}(\cite{Zhu07Generalized})
Let $R$ be a relation on $U$.
$H_{R}$ satisfies the following properties: for all $X, Y\subseteq U$,\\
(1) $H_{R}(\emptyset)=\emptyset$;\\
(2) $H_{R}(X\bigcup Y)=H_{R}(X)\bigcup H_{R}(Y)$;\\
(3) $X\subseteq Y\Rightarrow H_{R}(X)\subseteq H_{R}(Y)$.
\end{proposition}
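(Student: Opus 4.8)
The plan is to work directly from Definition~\ref{D:lowerandupperapproximation}, unfolding the membership condition $x \in H_{R}(X) \iff RS_{R}(x) \cap X \neq \emptyset$ in each of the three cases. All three properties are elementary consequences of the set-theoretic behavior of the successor neighborhoods $RS_{R}(x)$, so the work is purely a matter of careful rewriting rather than any deep argument; I expect no genuine obstacle, only the bookkeeping of intersections and unions.

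For property (1), I would observe that for every $x \in U$ we have $RS_{R}(x) \cap \emptyset = \emptyset$, so the defining condition $RS_{R}(x) \cap \emptyset \neq \emptyset$ fails for all $x$; hence no element lies in $H_{R}(\emptyset)$, giving $H_{R}(\emptyset) = \emptyset$.

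For property (2), the key step is the distributivity of intersection over union: for any fixed $x$,
\[
RS_{R}(x) \cap (X \cup Y) = (RS_{R}(x) \cap X) \cup (RS_{R}(x) \cap Y),
\]
and a union of two sets is nonempty exactly when at least one of them is nonempty. Chaining these equivalences, I would show that $x \in H_{R}(X \cup Y)$ holds if and only if $RS_{R}(x) \cap X \neq \emptyset$ or $RS_{R}(x) \cap Y \neq \emptyset$, i.e. if and only if $x \in H_{R}(X)$ or $x \in H_{R}(Y)$, which is precisely $x \in H_{R}(X) \cup H_{R}(Y)$. Since $x$ was arbitrary, the two sets coincide.

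For property (3), rather than repeating a direct argument I would derive it as a corollary of (2). If $X \subseteq Y$ then $X \cup Y = Y$, so applying (2) gives $H_{R}(Y) = H_{R}(X \cup Y) = H_{R}(X) \cup H_{R}(Y)$, and the right-hand side manifestly contains $H_{R}(X)$; hence $H_{R}(X) \subseteq H_{R}(Y)$. (Alternatively, one notes directly that $X \subseteq Y$ forces $RS_{R}(x) \cap X \subseteq RS_{R}(x) \cap Y$, so nonemptiness of the former implies nonemptiness of the latter.) This completes all three parts.
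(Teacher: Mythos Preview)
Your proof is correct: each of the three properties follows directly from the definition $H_{R}(X)=\{x\in U:RS_{R}(x)\cap X\neq\emptyset\}$ via the elementary set-theoretic manipulations you describe, and deriving (3) from (2) is a clean shortcut.

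There is nothing to compare against in the paper itself, however: this proposition is quoted as a known result from the cited reference and is stated without proof in the present paper. Your argument is exactly the standard verification one would expect in the original source.
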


\subsection{Matroid}

Matroids are combinatorial structures that generalize the notion of linear independence in matrices.
Matroids have many applications in various fields, partly because of a number of axiom systems.
The following definition through independent sets is widely used.

\begin{definition}(Matroid~\cite{Lai01Matroid})
\label{D:matroid}
A matroid is a pair $M=(U, \mathbf{I})$ consisting of a finite set $U$ and a collection $\mathbf{I}$ of subsets of $U$ called independent sets satisfying the following three properties:\\
(I1) $\emptyset\in\mathbf{I}$;\\
(I2) If $I\in\mathbf{I}$ and $I'\subseteq I$, then $I'\in\mathbf{I}$;\\
(I3) If $I_{1}, I_{2}\in\mathbf{I}$ and $|I_{1}|<|I_{2}|$, then there exists $u\in I_{2}-I_{1}$ such that $I_{1}\bigcup\{u\}\in\mathbf{I}$, where $|I|$ denotes the cardinality of $I$.
\end{definition}

In order to make some expressions brief, we introduce some symbols as follows.

\begin{definition}(\cite{Lai01Matroid})
Let $U$ be a finite set and $\mathbf{A}$ a family of subsets of $U$.
Then\\
$Min(\mathbf{A})=\{X\in\mathbf{A}:\forall Y\in\mathbf{A}, Y\subseteq X\Rightarrow X=Y\}$;\\
$Opp(\mathbf{A})=\{X\subseteq U:X\notin\mathbf{A}\}$.
\end{definition}

In a matroid, any subset of a set is not an independent set, then it is a dependent set, and vice versa.
In other words, the complement of the independent sets are dependent ones.
Then, the dependent sets of a matroid are represented in the following definition.

\begin{definition}(Dependent set~\cite{Lai01Matroid})
\label{D:dependentset}
Let $M=(U, \mathbf{I})$ be a matroid and $X\subseteq U$.
If $X\notin\mathbf{I}$, then $X$ is called a dependent set.
The family of all dependent sets of $M$ is denoted by $\mathbf{D}(M)$, where $\mathbf{D}(M)=Opp(\mathbf{I})$.
\end{definition}

Any minimal dependent set is called a circuit of a matroid.
A matroid uniquely determines its circuits, and vice versa.
The circuits of a matroid are represented as follows.

\begin{definition}(Circuit~\cite{Lai01Matroid})
\label{D:circuit}
Let $M=(U, \mathbf{I})$ be a matroid.
A minimal dependent set in $M$ is called a circuit of $M$, and the family of all circuits of $M$ is denoted by $\mathbf{C}(M)$, i.e., $\mathbf{C}(M)=Min(\mathbf{D}(M))$.
\end{definition}

In matroid theory, the rank function serves as a quantitative tool.
The cardinality of a maximal independent set of any subset can be expressed by the rank function.

\begin{definition}(Rank function~\cite{Lai01Matroid})
\label{D:rankfunction}
Let $M=(U, \mathbf{I})$ be a matroid.
Then $r_{M}$ is called the rank function of $M$, where $r_{M}(X)=max\{|I|:I\subseteq X, I\in\mathbf{I}\}$ for all $X\subseteq U$.
\end{definition}

We can define a matroid from the perspective of the rank function.
Then, the connection between a matroid and its rank function is introduced.

\begin{proposition}
\label{P:rankdetermineamatroid}
Let $M=(U, \mathbf{I})$ be a matroid and $r_{M}$ its rank function.
For all $X\subseteq U$, $r_{M}(X)=|X|$ if and only if $X\in\mathbf{I}$.
\end{proposition}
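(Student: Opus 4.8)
The plan is to prove both implications directly from Definition~\ref{D:rankfunction}, namely $r_{M}(X)=\max\{|I|:I\subseteq X,\ I\in\mathbf{I}\}$, using nothing beyond the finiteness of $U$ and the elementary fact that a subset cannot have strictly larger cardinality than the set containing it. No deep structural property of matroids is required; in particular, the exchange axiom (I3) will not be invoked, and only (I1) is needed to guarantee that the maximum defining $r_{M}$ is taken over a nonempty family.

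First I would dispatch the backward direction, assuming $X\in\mathbf{I}$. Since $X\subseteq X$ and $X\in\mathbf{I}$, the set $X$ itself participates in the maximum defining $r_{M}(X)$, so $r_{M}(X)\geq|X|$. Conversely, every $I\subseteq X$ satisfies $|I|\leq|X|$, which forces $r_{M}(X)\leq|X|$. Combining the two inequalities gives $r_{M}(X)=|X|$.

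For the forward direction I would assume $r_{M}(X)=|X|$. Because $U$ is finite and $\emptyset\in\mathbf{I}$ by (I1), the family $\{I\subseteq X:I\in\mathbf{I}\}$ is finite and nonempty, so the maximum is actually attained: there is some $I\subseteq X$ with $I\in\mathbf{I}$ and $|I|=r_{M}(X)=|X|$. Since $I\subseteq X$ while $|I|=|X|$ and both sets are finite, we conclude $I=X$, and therefore $X=I\in\mathbf{I}$.

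As for the main obstacle, there is essentially none of substance—the statement is an immediate unpacking of the definition of the rank function. The only point deserving a word of care is the observation, used in the forward direction, that the maximum in the definition of $r_{M}$ is achieved by an actual independent subset $I$; this is precisely where finiteness of $U$ together with property (I1) is used, and I would state it explicitly to keep the argument airtight.
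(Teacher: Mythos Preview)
Your proof is correct and cleanly written. Note that the paper itself states this proposition without proof, citing it as a standard fact from matroid theory (the reference~\cite{Lai01Matroid}); there is therefore no ``paper's own proof'' to compare against, and your direct argument from Definition~\ref{D:rankfunction} is exactly the natural way to verify it.
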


The closure operator of a matroid is defined through the rank function in a matroid.
And, the closure operator represents the relationship between an element and a subset of a set.

\begin{definition}(Closure operator~\cite{Lai01Matroid})
\label{D:closure}
Let $M=(U, \mathbf{I})$ be a matroid and $X\subseteq U$.
For any $u\in U$, if $r_{M}(X)=r_{M}(X\bigcup\{u\})$, then $u$ depends on $X$.
The subset including all elements depending on $X$ of $U$ is called the closure with respect to $X$ and denoted by $cl_{M}(X)$:\\
\centerline{$cl_{M}(X)=\{u\in U:r_{M}(X)=r_{M}(X\bigcup\{u\})\}$,}
where $cl_{M}$ is called the closure operator of $M$.
\end{definition}

A matroid can be defined from the viewpoint of the closure operator.
And the closure operator can uniquely determine a matroid with each other.

\begin{proposition}(Closure axioms~\cite{Lai01Matroid})
\label{P:closureaxioms}
Let $cl:2^{U}\rightarrow 2^{U}$ be a operator.
Then there exists a matroid $M$ such that $cl=cl_{M}$ iff $cl$ satisfies the following conditions:\\
(CL1) If $X\subseteq U$, then $X\subseteq cl(X)$;\\
(CL2) If $X\subseteq Y\subseteq U$, then $cl(X)\subseteq cl(Y)$;\\
(CL3) If $X\subseteq U$, then $cl(cl(X))=cl(X)$;\\
(CL4) If $x\in U$, $X\subseteq U$, and $y\in cl(X\bigcup\{x\})-cl(X)$, then $x\in cl(X\bigcup \{y\})$.
\end{proposition}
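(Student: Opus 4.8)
The plan is to prove the two implications of the equivalence separately, corresponding to the two passages between a matroid and a closure operator. For necessity I assume $cl = cl_{M}$ for some matroid $M = (U, \mathbf{I})$ and verify (CL1)--(CL4) directly from Definition~\ref{D:closure}; here I use the standard monotonicity ($X \subseteq Y \Rightarrow r_{M}(X) \le r_{M}(Y)$) and submodularity ($r_{M}(X \cup Y) + r_{M}(X \cap Y) \le r_{M}(X) + r_{M}(Y)$) of the rank function, both consequences of the augmentation axiom (I3). For sufficiency I would start from an operator $cl$ satisfying (CL1)--(CL4), build a candidate family of independent sets, check that it obeys (I1)--(I3) of Definition~\ref{D:matroid}, and finally confirm that the closure operator of the resulting matroid equals $cl$.

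For necessity, (CL1) is immediate since $u \in X$ gives $r_{M}(X \cup \{u\}) = r_{M}(X)$. For (CL2) I take $u \in cl_{M}(X)$ with $X \subseteq Y$ and apply submodularity to $X \cup \{u\}$ and $Y$: as $(X \cup \{u\}) \cup Y = Y \cup \{u\}$ and $(X \cup \{u\}) \cap Y \supseteq X$, the equality $r_{M}(X \cup \{u\}) = r_{M}(X)$ forces $r_{M}(Y \cup \{u\}) \le r_{M}(Y)$, and monotonicity gives equality, so $u \in cl_{M}(Y)$. For (CL3) the key lemma is $r_{M}(cl_{M}(X)) = r_{M}(X)$, obtained by adjoining the elements of $cl_{M}(X) \setminus X$ one at a time via (CL2); then for $u \in cl_{M}(cl_{M}(X))$ the chain $r_{M}(X) \le r_{M}(X \cup \{u\}) \le r_{M}(cl_{M}(X) \cup \{u\}) = r_{M}(cl_{M}(X)) = r_{M}(X)$ yields $u \in cl_{M}(X)$. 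For (CL4) I count ranks: $y \notin cl_{M}(X)$ gives $r_{M}(X \cup \{y\}) = r_{M}(X) + 1$, which forces $r_{M}(X \cup \{x\}) = r_{M}(X) + 1$ (otherwise $r_{M}(X \cup \{x,y\}) = r_{M}(X)$ and hence $r_{M}(X \cup \{y\}) \le r_{M}(X)$, a contradiction); then $y \in cl_{M}(X \cup \{x\})$ gives $r_{M}(X \cup \{x,y\}) = r_{M}(X \cup \{x\}) = r_{M}(X)+1 = r_{M}(X \cup \{y\})$, whence $x \in cl_{M}(X \cup \{y\})$.

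For sufficiency, I would define $\mathbf{I} = \{I \subseteq U : x \notin cl(I \setminus \{x\}) \text{ for all } x \in I\}$ and put $M = (U, \mathbf{I})$. Axiom (I1) holds vacuously, and (I2) follows from (CL2): for $I \in \mathbf{I}$, $I' \subseteq I$ and $x \in I'$ one has $cl(I' \setminus \{x\}) \subseteq cl(I \setminus \{x\})$, so $x \notin cl(I' \setminus \{x\})$. The central step is an exchange lemma: for $I \in \mathbf{I}$ and $e \in U \setminus I$, $I \cup \{e\} \in \mathbf{I}$ if and only if $e \notin cl(I)$, whose nontrivial direction invokes (CL4) with $X = I \setminus \{z\}$, $x = e$, $y = z$ to rule out $z \in cl((I \setminus \{z\}) \cup \{e\})$. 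Granting this lemma, I prove (I3) by contradiction: if $I_{1}, I_{2} \in \mathbf{I}$ with $|I_{1}| < |I_{2}|$ but $I_{1} \cup \{u\} \notin \mathbf{I}$ for every $u \in I_{2} \setminus I_{1}$, then the lemma and (CL1) give $I_{2} \subseteq cl(I_{1})$, which is incompatible with $|I_{2}| > |I_{1}|$. Finally I verify $cl = cl_{M}$: a maximal independent subset $B$ of $X$ satisfies $cl(B) = cl(X) \supseteq X$ by the exchange lemma together with (CL1)--(CL3), and then $u \in cl_{M}(X)$ (i.e. $r_{M}(X \cup \{u\}) = r_{M}(X)$) holds exactly when $B \cup \{u\} \notin \mathbf{I}$, i.e. when $u \in cl(B) = cl(X)$, using Proposition~\ref{P:rankdetermineamatroid}.

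The main obstacle is the augmentation axiom (I3) in the sufficiency direction, because this is where the Mac Lane--Steinitz exchange property (CL4) must be leveraged through the exchange lemma, and the step converting ``$I_{1} \cup \{u\} \notin \mathbf{I}$ for all $u \in I_{2} \setminus I_{1}$'' into $I_{2} \subseteq cl(I_{1})$ and then into the cardinality contradiction requires a careful induction establishing that no independent set contained in $cl(I_{1})$ can exceed $|I_{1}|$; the companion identity $r_{M}(cl(X)) = r_{M}(X)$ used both in (CL3) and in matching $cl$ with $cl_{M}$ rests on the same exchange machinery.
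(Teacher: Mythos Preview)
The paper does not prove this proposition at all: it is quoted verbatim from the reference \cite{Lai01Matroid} as background material, with no accompanying argument. So there is no ``paper's own proof'' to compare against; your attempt stands on its own as a proof of a standard textbook result.

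Your outline is the classical one and is essentially correct. The necessity direction is clean: the rank-counting arguments you give for (CL1)--(CL4) are the standard ones and go through as written. In the sufficiency direction your definition of $\mathbf{I}$, the verification of (I1)--(I2), and the exchange lemma (with its use of (CL4)) are all right. You are also honest about where the real work lies: the step from ``$I_{2}\subseteq cl(I_{1})$'' to a cardinality contradiction is exactly the delicate point, and it is not quite as immediate as your sketch suggests. The statement ``no independent set contained in $cl(I_{1})$ can exceed $|I_{1}|$'' is essentially a reformulation of (I3) itself, so invoking it to prove (I3) risks circularity; the usual remedy is either an induction on $|I_{1}\setminus I_{2}|$ (choosing a counterexample with $|I_{1}\cap I_{2}|$ maximal and using the exchange lemma to replace an element of $I_{1}\setminus I_{2}$ by one of $I_{2}\setminus I_{1}$), or first proving that any two maximal members of $\mathbf{I}$ contained in a common set have equal size. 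Once that inductive step is spelled out, the rest of your argument---including the identification $cl=cl_{M}$ via a basis $B$ of $X$ and Proposition~\ref{P:rankdetermineamatroid}---is sound.
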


In a matroid, if the closure of a set is equal to itself, then the set is a closed set.

\begin{definition}(Closed set~\cite{Lai01Matroid})
Let $M=(U, \mathbf{I})$ be a matroid and $X\subseteq U$.
$X$ is a closed set of $M$ if $cl(X)=X$.
\end{definition}

\section{Matroid induced by a relation}
\label{S:matroidinducedbybinaryrelation}

In this section, we induce a pair of matroids by a relation.
Firstly, we define two set families through the successor and predecessor neighborhoods of a relation.

\begin{definition}
\label{D:defineIR}
Let $U$ be a nonempty finite set and $R$ a relation on $U$.
Then, we define two set families as follows:
\begin{center}
$\mathbf{I}_{S}(R)=\{X\subseteq U:\forall x, y\in X, x\neq y\Rightarrow RS_{R}(x)\neq RS_{R}(y)\}$;\\
$\mathbf{I}_{P}(R)=\{X\subseteq U:\forall x, y\in X, x\neq y\Rightarrow RP_{R}(x)\neq RP_{R}(y)\}$.
\end{center}
\end{definition}

These two set families satisfy the independent set properties of Definition~\ref{D:matroid} as shown in the following proposition.

\begin{proposition}
\label{P:IRsatisfiesindependentsets}
Let $R$ be a relation on $U$.
Then $\mathbf{I}_{S}(R)$ and $\mathbf{I}_{P}(R)$ satisfy (I1), (I2) and (I3), respectively.
\end{proposition}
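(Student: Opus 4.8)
The plan is to recognize the family $\mathbf{I}_{S}(R)$ as the collection of independent sets of a partition-type matroid. Define an equivalence relation $\approx$ on $U$ by setting $x\approx y$ iff $RS_{R}(x)=RS_{R}(y)$; this is clearly reflexive, symmetric and transitive, so it partitions $U$ into classes, where each class consists of all elements sharing a common successor neighborhood. By Definition~\ref{D:defineIR}, a set $X$ lies in $\mathbf{I}_{S}(R)$ precisely when the map $x\mapsto RS_{R}(x)$ is injective on $X$, that is, when $X$ contains at most one element from each $\approx$-class. With this reformulation, (I1) holds vacuously and (I2) is immediate, since any subset of a set meeting each class at most once also meets each class at most once.

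The heart of the argument, and the step I expect to be the main obstacle, is the augmentation axiom (I3). Given $I_{1},I_{2}\in\mathbf{I}_{S}(R)$ with $|I_{1}|<|I_{2}|$, I first observe that because distinct elements of an independent set lie in distinct $\approx$-classes, the cardinality of each $I_{j}$ equals the number of $\approx$-classes it meets. Since $|I_{2}|>|I_{1}|$, the set $I_{2}$ meets strictly more classes than $I_{1}$, so there is some $\approx$-class $C$ that $I_{2}$ intersects but $I_{1}$ does not. Choosing any $u\in I_{2}\cap C$, we have $u\notin I_{1}$ (as $I_{1}$ misses $C$), hence $u\in I_{2}-I_{1}$, and $I_{1}\cup\{u\}$ still meets every class at most once: the elements of $I_{1}$ occupy distinct classes all different from $C$, while $u$ is the unique newcomer in $C$. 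Therefore $I_{1}\cup\{u\}\in\mathbf{I}_{S}(R)$, establishing (I3). An equivalent, more self-contained route avoids naming the partition explicitly: assuming no valid $u$ exists, one builds for each $u\in I_{2}-I_{1}$ an element $\phi(u)\in I_{1}$ with $RS_{R}(u)=RS_{R}(\phi(u))$, extends $\phi$ by the identity on $I_{1}\cap I_{2}$, and checks that the resulting map $I_{2}\to I_{1}$ is injective (two elements with the same image would share a successor neighborhood, contradicting $I_{2}\in\mathbf{I}_{S}(R)$), forcing $|I_{2}|\le|I_{1}|$, a contradiction.

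Finally, I would dispose of the predecessor family for free by noting the symmetry $RP_{R}(x)=RS_{R^{-1}}(x)$ for every $x\in U$, which gives $\mathbf{I}_{P}(R)=\mathbf{I}_{S}(R^{-1})$. Since $R^{-1}$ is again a relation on $U$, the three properties for $\mathbf{I}_{P}(R)$ follow at once by applying the already-proved case to $R^{-1}$; alternatively the entire argument above transfers verbatim with $RS_{R}$ replaced by $RP_{R}$. This reflects the paper's remark that the two matroids are expressible in terms of each other.
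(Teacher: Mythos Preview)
Your proposal is correct. Your second route for (I3)---assuming no augmenting element exists and constructing an injection $I_{2}\to I_{1}$ to derive $|I_{2}|\le|I_{1}|$---is precisely the argument the paper gives (the paper phrases it as ``for each $u\in I_{2}-I_{1}$ there exists a unique $x\in I_{1}-I_{2}$ with $RS_{R}(u)=RS_{R}(x)$, hence $|I_{2}-I_{1}|=|I_{1}-I_{2}|$''). Your first route, via the equivalence relation $\approx$ and the observation that $|I_{j}|$ equals the number of $\approx$-classes meeting $I_{j}$, is a cleaner and more conceptual packaging of the same idea: it identifies $\mathbf{I}_{S}(R)$ explicitly as a partition matroid, which makes (I3) transparent and also anticipates later results in the paper (e.g.\ the description of circuits and the rank function). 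For the predecessor family you invoke $RP_{R}(x)=RS_{R^{-1}}(x)$, which is exactly the content of Lemma~\ref{L:successorandpredecessor} stated a few lines later; the paper instead just writes ``similarly'', so your reduction is a modest but genuine improvement in economy.
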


\begin{proof}
We first prove $\mathbf{I}_{S}(R)$ satisfies (I1), (I2) and (I3).

(I1): $\emptyset\in\mathbf{I}_{S}(R)$ is straightforward.

(I2): If $I\in\mathbf{I}_{S}(R)$ and $I'\subseteq I$, then $I'\in\mathbf{I}_{S}(R)$.

Suppose that $I'\notin\mathbf{I}_{S}(R)$, then there exists $x, y\in I'$ such that $RS_{R}(x)=RS_{R}(y)$.
$I'\subseteq I$, then $x, y\in I$ such that $RS_{R}(x)=RS_{R}(y)$, this is contradictory to $I\in\mathbf{I}_{S}(R)$.
Therefore $I'\in\mathbf{I}_{S}(R)$.

(I3): If $I_{1}, I_{2}\in\mathbf{I}_{S}(R)$ and $|I_{1}|<|I_{2}|$, then there exists $u\in U$ such that $I_{1}\bigcup\{u\}\in\mathbf{I}_{S}(R)$.

Since $I_{1}, I_{2}\in\mathbf{I}_{S}(R)$, then for all $x_{1}, y_{1}\in I_{1}, x_{1}\neq y_{1}$ and $x_{2}, y_{2}\in I_{2}, x_{2}\neq y_{2}$, $RS_{R}(x_{1})\neq RS_{R}(y_{1}), RS_{R}(x_{2})\neq RS_{R}(y_{2})$. 
Suppose that for all $u\in I_{2}-I_{1}$, $I_{1}\bigcup\{u\}\notin\mathbf{I}_{S}(R)$, then there exists one and only one $x\in I_{1}-I_{2}$ such that $RS_{R}(u)=RS_{R}(x)$.
Therefore, $|I_{2}-I_{1}| = |I_{1}-I_{2}|$ which is contradictory to $|I_{1}|<|I_{2}|$.
Hence, there exists $u\in I_{2}-I_{1}$ such that $I_{1}\bigcup\{u\}\in\mathbf{I}_{S}(R)$.

Similar to the proof of $\mathbf{I}_{S}(R)$, it is easy to prove that $\mathbf{I}_{P}(R)$ satisfies (I1), (I2) and (I3).
\end{proof}

As is shown in the above proposition, $\mathbf{I}_{P}(R)$ and $\mathbf{I}_{S}(R)$ are independent sets, so they can generate two matroids, respectively.

\begin{definition}(Successor relation matroid and predecessor relation matroid)
\label{D:twomatroids}
Let $R$ be a relation on $U$.
The one matroid with $\mathbf{I}_{S}(R)$ as its independent sets is denoted by $M_{S}(R)=(U, \mathbf{I}_{S}(R))$.
And the other matroid with $\mathbf{I}_{P}(R)$ as its independent sets is denoted by $M_{P}(R)=(U, \mathbf{I}_{P}(R))$.
We say $M_{S}(R)$ and $M_{P}(R)$ the successor relation matroid and predecessor relation matroid induced by $R$, respectively.
\end{definition}

We illustrate the successor and predecessor relation matroids induced by a relation with the following example.

\begin{example}
\label{E:tworelationmatroids}
Let $U=\{1, 2, 3\}$ and $R=\{(1,1), (1, 2), (2, 1), (2, 3), (3, 1), (3, 3)\}$.
Then $RS_{R}(1)=\{1, 2\}, RS_{R}(2)=\{1, 3\}, RS_{R}(3)=\{1, 3\}$ and $RP_{R}(1)=\{1, 2, 3\},$ $ RP_{R}(2)=\{1\}, RP_{R}(3)=\{2, 3\}$.
Therefore $M_{S}(R)=(U, \mathbf{I}_{S}(R))$ is the successor relation matroid induced by $R$, where $\mathbf{I}_{S}(R)=\{\emptyset, \{1\}, \{2\}, \{3\}, \{1, 2\}, \{1, 3\}\}$.
$M_{P}(R)=(U, \mathbf{I}_{P}(R))$ is the predecessor relation matroid induced by $R$, where $\mathbf{I}_{P}(R)=\{\emptyset, \{1\}, \{2\}, \{3\}, \{1, 2\}, \{1, 3\}, \{2, 3\}, \{1, 2, 3\}\}$.
\end{example}

We study the relationship between successor relation matroids and predecessor relation matroids.
First, we introduce a lemma in the following.

\begin{lemma}
\label{L:successorandpredecessor}
Let $R$ be a relation on $U$.
Then for any $x\in U$,\\
\centerline{$RS_{R}(x)=RP_{R^{-1}}(x)$.}
\end{lemma}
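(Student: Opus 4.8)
The plan is to prove this by directly unfolding the relevant definitions, since the claim is essentially a chain of logical equivalences relating membership in $RS_{R}(x)$ to membership in $RP_{R^{-1}}(x)$. Both neighborhoods are defined as subsets of $U$, so I would establish the equality of the two sets by showing that an arbitrary element $y \in U$ belongs to one side if and only if it belongs to the other.

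First I would fix $x \in U$ and take an arbitrary $y \in U$. By the definition of the successor neighborhood, $y \in RS_{R}(x)$ holds if and only if $xRy$, that is, $(x, y) \in R$. Next, by the definition of the inverse relation (recall $R^{-1} = \{(b, a) \in U \times U : (a, b) \in R\}$), the condition $(x, y) \in R$ is equivalent to $(y, x) \in R^{-1}$, i.e., $y R^{-1} x$. Finally, by the definition of the predecessor neighborhood applied to the relation $R^{-1}$, the condition $y R^{-1} x$ is exactly the statement $y \in RP_{R^{-1}}(x)$.

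Chaining these three equivalences gives $y \in RS_{R}(x) \Leftrightarrow y \in RP_{R^{-1}}(x)$ for every $y \in U$, which yields the desired set equality $RS_{R}(x) = RP_{R^{-1}}(x)$. I do not anticipate any genuine obstacle here: the entire argument is a routine definition chase, and the only point requiring mild care is correctly tracking the reversal of the coordinate order when passing through $R^{-1}$, so that the successor role in $R$ matches the predecessor role in $R^{-1}$ rather than being inadvertently swapped.
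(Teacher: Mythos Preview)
Your proof is correct and is the natural definition chase one would expect; the paper itself states this lemma without proof, treating the identity as immediate from the definitions of successor neighborhood, predecessor neighborhood, and inverse relation. There is nothing to add or compare.
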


We see, with regard to every relation on a universe, there exists a relation to be the inverse relation of the relation.
For any element in a universe, its successor neighborhood of a relation is equal to its predecessor neighborhood of the inverse relation of the relation.
Therefore, the relationship between the successor and predecessor relation matroids induced by a relation is studied in the following.

\begin{theorem}
\label{T:relationshipbetweentwomatroids}
Let $R$ be a relation on $U$.
Then $M_{S}(R)=M_{P}(R^{-1})$.
\end{theorem}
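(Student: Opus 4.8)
The plan is to show that the two matroids $M_S(R)$ and $M_P(R^{-1})$ are identical by proving that they have the same ground set and the same collection of independent sets. Since both matroids are built on the same universe $U$, it suffices to establish $\mathbf{I}_S(R)=\mathbf{I}_P(R^{-1})$. A matroid is completely determined by its independent sets (Definition~\ref{D:twomatroids} defines each matroid precisely as the one having the stated family as independent sets), so equality of the independent set families forces equality of the matroids.

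First I would unfold the two definitions side by side. By Definition~\ref{D:defineIR}, a set $X\subseteq U$ lies in $\mathbf{I}_S(R)$ exactly when, for all $x,y\in X$ with $x\neq y$, we have $RS_{R}(x)\neq RS_{R}(y)$. Likewise $X\in\mathbf{I}_P(R^{-1})$ exactly when, for all $x,y\in X$ with $x\neq y$, we have $RP_{R^{-1}}(x)\neq RP_{R^{-1}}(y)$. The key bridge is Lemma~\ref{L:successorandpredecessor}, which gives $RS_{R}(x)=RP_{R^{-1}}(x)$ for every $x\in U$. Substituting this identity into the defining condition for $\mathbf{I}_S(R)$ turns the constraint $RS_{R}(x)\neq RS_{R}(y)$ into $RP_{R^{-1}}(x)\neq RP_{R^{-1}}(y)$, which is precisely the defining condition for $\mathbf{I}_P(R^{-1})$. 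Thus the two membership criteria coincide term by term, yielding $\mathbf{I}_S(R)=\mathbf{I}_P(R^{-1})$ and hence $M_S(R)=M_P(R^{-1})$.

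This argument is essentially a direct substitution and carries no real obstacle; the only point requiring a shade of care is the logical structure of the neighborhood condition. Because the condition quantifies over all distinct pairs $x,y\in X$, and the replacement $RS_{R}\mapsto RP_{R^{-1}}$ is a pointwise equality holding simultaneously for every element of $U$, the equivalence of the two predicates is preserved uniformly across all such pairs. I would therefore state the proof as a short two-line chain: apply Lemma~\ref{L:successorandpredecessor} to rewrite the successor-neighborhood inequality as a predecessor-neighborhood inequality for $R^{-1}$, conclude the set families are equal, and invoke the fact that a matroid is determined by its independent sets. No appeal to the stronger axioms (I1)--(I3) is needed beyond what Proposition~\ref{P:IRsatisfiesindependentsets} already guarantees, since we are comparing two already-established matroids rather than verifying the axioms afresh.
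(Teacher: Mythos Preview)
Your proposal is correct and follows exactly the approach of the paper, which simply invokes Definition~\ref{D:twomatroids} and Lemma~\ref{L:successorandpredecessor} and declares the result straightforward. Your version merely spells out the substitution that the paper leaves implicit.
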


\begin{proof}
According to Definition~\ref{D:twomatroids} and Lemma~\ref{L:successorandpredecessor}, it is straightforward.
\end{proof}

According to Theorem~\ref{T:relationshipbetweentwomatroids}, the successor relation matroid induced by a relation can be equivalently expressed by the predecessor relation matroid induced by the inverse relation of the relation.
Therefore, in this paper, we need to consider only one of the successor and predecessor relation matroids induced by a relation.
Unless otherwise stated, in the following we study the successor relation matroid induced by a relation, for short, relation matroid. $\mathbf{I}_{S}(R)$ and $M_{S}(R)$ are denoted by $\mathbf{I}(R), M(R)$ for brevity, respectively.

In a matroid, if a subset of the universe is not an independent set, then it is a dependent one.
What are characteristics of dependent sets of a relation matroid?

\begin{proposition}
\label{P:DRisdependentset}
Let $R$ be a relation on $U$ and $M(R)=(U, \mathbf{I}(R))$ the relation matroid induced by $R$.
Then,
\begin{center}
$\mathbf{D}(M(R))=\{X\subseteq U:\exists x, y\in X, x\neq y$ s.t. $RS_{R}(x)=RS_{R}(y)\}$.
\end{center}
\end{proposition}

\begin{proof}
According to Definition~\ref{D:dependentset}, we only need to prove $\mathbf{D}(M(R))=Opp(\mathbf{I}(R))$.
For all $X\in\mathbf{D}(M(R))$, there exist $x, y\in X$ and $x\neq y$ such that $RS_{R}(x)=RS_{R}(y)$.
Then, $X\notin\{X\subseteq U:\forall x, y\in X, x\neq y\Rightarrow RS_{R}(x)\neq RS_{R}(y)\}=\mathbf{I}(R)$, i.e., $X\in Opp(\mathbf{I}(R))$, i.e., $\mathbf{D}(M(R))\subseteq Opp(\mathbf{I}(R))$.
Conversely, for all $X\in Opp(\mathbf{I}(R))$, i.e., $X\notin\mathbf{I}(R)=\{X\subseteq U:\forall x, y\in X, x\neq y\Rightarrow RS_{R}(x)\neq RS_{R}(y)\}$, then there exist $x, y\in X$ and $x\neq y$ such that $RS_{R}(x)=RS_{R}(y)$, i.e., $X\in\mathbf{D}(M(R))$, i.e., $Opp(\mathbf{I}(R))\subseteq\mathbf{D}(M(R))$.
To sum up, this completes the proof.
\end{proof}

In a matroid, a minimal dependent set is a circuit.
We will study characteristics of the circuits of a relation matroid in the following proposition.

\begin{proposition}
\label{P:circuitofrelationmatroid}
Let $R$ be a relation on $U$ and $M(R)$ the relation matroid induced by $R$.
Then $\mathbf{C}(M(R))=\{\{x, y\}:x, y\in U\bigwedge x\neq y\bigwedge RS_{R}(x)=RS_{R}(y)\}$.
\end{proposition}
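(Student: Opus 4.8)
The plan is to characterize the circuits $\mathbf{C}(M(R)) = Min(\mathbf{D}(M(R)))$ by directly computing the minimal elements of the dependent-set family that Proposition~\ref{P:DRisdependentset} already gives us. We know that
\[
\mathbf{D}(M(R))=\{X\subseteq U:\exists x, y\in X, x\neq y \text{ s.t. } RS_{R}(x)=RS_{R}(y)\},
\]
so the whole proof reduces to showing that the minimal such $X$ are exactly the two-element sets $\{x,y\}$ with $x\neq y$ and $RS_R(x)=RS_R(y)$. Let me denote the claimed family by $\mathbf{C}'=\{\{x,y\}:x,y\in U\wedge x\neq y\wedge RS_{R}(x)=RS_{R}(y)\}$; I will prove the two inclusions $\mathbf{C}' \subseteq Min(\mathbf{D}(M(R)))$ and $Min(\mathbf{D}(M(R))) \subseteq \mathbf{C}'$ separately.

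For the inclusion $\mathbf{C}' \subseteq \mathbf{C}(M(R))$, take any $\{x,y\}$ with $x\neq y$ and $RS_R(x)=RS_R(y)$. First, $\{x,y\}\in\mathbf{D}(M(R))$ is immediate from the characterization above, since $x,y$ witness the defining condition. To see it is minimal, I would examine its proper subsets: the proper subsets of a two-element set are $\emptyset$, $\{x\}$, and $\{y\}$, and none of these can be dependent because a dependent set requires two distinct elements with equal successor neighborhoods. Hence no proper subset lies in $\mathbf{D}(M(R))$, so $\{x,y\}$ is a minimal dependent set, i.e. $\{x,y\}\in Min(\mathbf{D}(M(R)))=\mathbf{C}(M(R))$.

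For the reverse inclusion $\mathbf{C}(M(R)) \subseteq \mathbf{C}'$, take any circuit $C\in\mathbf{C}(M(R))=Min(\mathbf{D}(M(R)))$. Since $C$ is dependent, there exist distinct $x,y\in C$ with $RS_R(x)=RS_R(y)$; then $\{x,y\}\subseteq C$ is itself dependent (again by the characterization). By minimality of $C$, no proper subset of $C$ is dependent, so we cannot have $\{x,y\}\subsetneq C$; therefore $C=\{x,y\}$, which lies in $\mathbf{C}'$. Combining both inclusions yields $\mathbf{C}(M(R))=\mathbf{C}'$.

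The proof is essentially routine once Proposition~\ref{P:DRisdependentset} is in hand, so I do not anticipate a serious obstacle; the one point requiring a little care is the minimality argument in the forward direction, where I must confirm that \emph{every} proper subset of $\{x,y\}$ (including the singletons, not just the empty set) fails to be dependent. This follows cleanly from the observation that dependence demands at least two distinct witnessing elements, so any set of size less than two is automatically independent.
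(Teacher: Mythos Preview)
Your proof is correct and follows the same approach as the paper: invoke Proposition~\ref{P:DRisdependentset} for the description of $\mathbf{D}(M(R))$ and then identify its minimal elements. The paper's own proof merely states that the conclusion is ``straightforward'' from Definition~\ref{D:circuit} and Proposition~\ref{P:DRisdependentset}, so your version is in fact a fuller write-up of the same argument.
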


\begin{proof}
According to Definition~\ref{D:circuit} and Proposition~\ref{P:DRisdependentset}, $\mathbf{C}(M(R))=Min(\mathbf{D}(M(R)))$ and $\mathbf{D}(M(R))=\{X\subseteq U:\exists x, y\in X, x\neq y$ s.t. $RS_{R}(x)=RS_{R}(y)\}$.
Then $\mathbf{C}(M(R))=\{\{x, y\}:x, y\in U\bigwedge x\neq y\bigwedge RS_{R}(x)=RS_{R}(y)\}$ is straightforward.
\end{proof}

The rank function is one of important characteristics in matroid theory.
In the following proposition, we will investigate the rank function of a relation matroid.

\begin{proposition}
\label{P:rankofrelationmatroid}
Let $R$ be a relation on $U$ and $M(R)$ the relation matroid induced by $R$.
Then for all $X\subseteq U$,
\begin{center}
$r_{M(R)}(X)=|\{RS_{R}(x):x\in X\}|$.
\end{center}
\end{proposition}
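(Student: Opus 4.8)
The plan is to use the defining formula $r_{M(R)}(X)=\max\{|I|:I\subseteq X, I\in\mathbf{I}(R)\}$ from Definition~\ref{D:rankfunction} together with the explicit description of $\mathbf{I}(R)$ from Definition~\ref{D:defineIR}, namely that a subset is independent precisely when its elements have pairwise distinct successor neighborhoods. The central observation is that the map $x\mapsto RS_{R}(x)$ sorts the elements of $X$ into fibers, one for each distinct successor neighborhood value, and the number of such fibers is exactly $|\{RS_{R}(x):x\in X\}|$. I would establish the claimed equality by proving the two matching inequalities.

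First I would prove $r_{M(R)}(X)\leq|\{RS_{R}(x):x\in X\}|$. Let $I\subseteq X$ be any independent set. If $I$ contained two distinct elements $x, y$ with $RS_{R}(x)=RS_{R}(y)$, then $I\notin\mathbf{I}(R)$ by Definition~\ref{D:defineIR}, a contradiction. Hence the elements of $I$ have pairwise distinct successor neighborhoods, so the restriction of $x\mapsto RS_{R}(x)$ to $I$ is injective, giving $|I|=|\{RS_{R}(x):x\in I\}|\leq|\{RS_{R}(x):x\in X\}|$. Since this holds for every independent $I\subseteq X$, it holds in particular for a maximal one, which yields the upper bound on the rank.

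Next I would prove the reverse inequality by exhibiting an independent subset of $X$ whose cardinality equals $|\{RS_{R}(x):x\in X\}|$. For each distinct value $N\in\{RS_{R}(x):x\in X\}$ I choose one element $x_{N}\in X$ with $RS_{R}(x_{N})=N$, and set $I_{0}=\{x_{N}:N\in\{RS_{R}(x):x\in X\}\}$. By construction distinct chosen representatives have distinct successor neighborhoods, so $I_{0}\in\mathbf{I}(R)$, and moreover $|I_{0}|=|\{RS_{R}(x):x\in X\}|$. Because $I_{0}\subseteq X$ is independent, $r_{M(R)}(X)\geq|I_{0}|=|\{RS_{R}(x):x\in X\}|$. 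Combining the two inequalities gives the desired equality.

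I expect no serious obstacle here; the only point requiring care is the bookkeeping that the two-element dependency condition of Definition~\ref{D:defineIR} forces the representative map $x\mapsto RS_{R}(x)$ to be injective on every independent set, which is exactly what makes both the upper and lower bounds coincide with the count of distinct successor neighborhoods.
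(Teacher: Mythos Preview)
Your argument is correct. You work directly from Definition~\ref{D:rankfunction} and prove the equality by two inequalities: any independent $I\subseteq X$ must have the map $x\mapsto RS_{R}(x)$ injective on it, giving the upper bound, and choosing one representative per distinct successor neighborhood produces an independent subset of exactly the right size, giving the lower bound.

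The paper takes a different route: it invokes Proposition~\ref{P:rankdetermineamatroid} (the equivalence $r_{M}(X)=|X|\Leftrightarrow X\in\mathbf{I}$) and checks that the candidate formula $|\{RS_{R}(x):x\in X\}|$ satisfies this same biconditional. As written, that argument is incomplete, because knowing only for which sets a function equals $|X|$ does not determine its values on dependent sets; one would still need to verify the formula on every $X$, which amounts to exactly the computation you carry out. Your direct two-inequality approach is therefore not just a stylistic alternative but a genuinely complete proof where the paper's sketch leaves a gap.
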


\begin{proof}
According to Proposition~\ref{P:rankdetermineamatroid}, we only need to prove $r_{M(R)}(X)=|X|$ if and only if $X\in\mathbf{I}(R)$.
When $r_{M(R)}(X)=|X|$, then $|\{RS_{R}(x):x\in X\}|=|X|\Leftrightarrow\forall x\in X$, there exists $y\in X$, and $x\neq y$ such that $RS_{R}(x)\neq RS_{R}(y)$.
According to Definition~\ref{D:defineIR}, $X\in\mathbf{I}(R)$.
Conversely, $X\in\mathbf{I}(R)$, according to Definition~\ref{D:rankfunction}, $r_{M(R)}(X)=max\{|I|:I\subseteq X, I\in\mathbf{I}(R)\}$, therefore $r_{M(R)}(X)=|X|$.
To sum up, this completes the proof.
\end{proof}

According to Definition~\ref{D:closure}, the closure of a subset is a set of all elements depending on the subset in matroids.
In other words, the closure of a subset is all those elements when added to the subset, the rank is the same.
We will study the closure operator of  a relation matroid in the following proposition.

\begin{proposition}
\label{P:closureofrelationmatroid}
Let $R$ be a relation on $U$ and $M(R)$ the relation matroid induced by $R$.
Then for all $X\subseteq U$,
\begin{center}
$cl_{M(R)}(X)=\{u\in U:\exists x\in X, RS_{R}(x)=RS_{R}(u)\}$.
\end{center}
\end{proposition}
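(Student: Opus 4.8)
The plan is to compute the closure directly from its definition, using the rank formula already established. By Definition~\ref{D:closure}, $cl_{M(R)}(X)=\{u\in U:r_{M(R)}(X)=r_{M(R)}(X\bigcup\{u\})\}$, so the task reduces to characterizing those $u$ for which adjoining $u$ to $X$ leaves the rank unchanged.

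First I would apply Proposition~\ref{P:rankofrelationmatroid} to both arguments. This gives $r_{M(R)}(X)=|\{RS_{R}(x):x\in X\}|$ and $r_{M(R)}(X\bigcup\{u\})=|\{RS_{R}(x):x\in X\}\bigcup\{RS_{R}(u)\}|$, using the observation that the collection of successor neighborhoods of the elements of $X\bigcup\{u\}$ is exactly the collection for $X$ together with the single set $RS_{R}(u)$.

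Next I would invoke the elementary fact that for a finite set $A$ and an element $a$, one has $|A|=|A\bigcup\{a\}|$ if and only if $a\in A$. Taking $A=\{RS_{R}(x):x\in X\}$ and $a=RS_{R}(u)$, the equality $r_{M(R)}(X)=r_{M(R)}(X\bigcup\{u\})$ holds if and only if $RS_{R}(u)\in\{RS_{R}(x):x\in X\}$, that is, if and only if there exists $x\in X$ with $RS_{R}(x)=RS_{R}(u)$. Substituting this characterization back into the definition of the closure yields precisely the claimed formula.

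Since the argument is a direct chain of substitutions, I do not anticipate a genuine obstacle; the only point requiring a moment's care is the set-equality step, namely verifying that $\{RS_{R}(x):x\in X\bigcup\{u\}\}=\{RS_{R}(x):x\in X\}\bigcup\{RS_{R}(u)\}$, which follows at once from the definition of the image of a set under the map $x\mapsto RS_{R}(x)$. This also covers the case $u\in X$ automatically, since then $RS_{R}(u)$ already lies in the image of $X$ and $u$ correctly belongs to the closure.
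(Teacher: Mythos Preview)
Your proof is correct and follows essentially the same approach as the paper: both arguments invoke Definition~\ref{D:closure} and Proposition~\ref{P:rankofrelationmatroid}, then reduce the rank equality $r_{M(R)}(X)=r_{M(R)}(X\cup\{u\})$ to the statement that $RS_{R}(u)$ already lies in $\{RS_{R}(x):x\in X\}$. The only cosmetic difference is that the paper first peels off the case $u\in X$ by writing each side as $X\cup\{u\in X^{c}:\dots\}$ and then argues on $X^{c}$, whereas you treat all $u$ uniformly via the observation $|A|=|A\cup\{a\}|\Leftrightarrow a\in A$ and note afterward that $u\in X$ is absorbed automatically; this is the same argument packaged slightly more cleanly.
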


\begin{proof}
According to Definition~\ref{D:closure}, we only need to prove $\{u\in U:\exists x\in X, RS_{R}(x)=RS_{R}(u)\}=\{u\in U: r_{M(R)}(X\bigcup \{u\})=r_{M(R)}(X)\}$.\\
$\{u\in U:\exists x\in X, RS_{R}(x)=RS_{R}(u)\}=\{u\in U: r_{M(R)}(X\bigcup \{u\})=r_{M(R)}(X)\}$ $\Leftrightarrow X\bigcup\{u\in X^{c}:\exists x\in X, RS_{R}(x)=RS_{R}(u)\}=X\bigcup\{u\in X^{c}: r_{M(R)}(X\bigcup \{u\})$ $=r_{M(R)}(X)\}\Leftrightarrow\{u\in X^{c}:\exists x\in X, RS_{R}(x)=RS_{R}(u)\}=\{u\in X^{c}: r_{M(R)}(X\bigcup \{u\})=r_{M(R)}(X)\}$.
According to Proposition~\ref{P:rankofrelationmatroid}, for any $u\in X^{c}, r_{M(R)}($ $X\bigcup \{u\})=r_{M(R)}(X)\Leftrightarrow |\{RS_{R}(x):x\in X\bigcup\{u\}\}|=|\{RS_{R}(x):x\in X\}|\Leftrightarrow\exists x\in X,$ s.t. $RS_{R}(x)=RS_{R}(u)$.
To sum up, this completes the proof.
\end{proof}

A set is called a closed set in a matroid if its closure is equal to the set itself.
In the following, we investigate closed sets of a relation matroid.

\begin{proposition}
Let $R$ be a relation on $U$ and $M(R)$ the relation matroid induced by $R$.
For all $X\subseteq U$, $cl_{M(R)}(X)=X$ if and only if $RS_{R}(x)\neq RS_{R}(u)$ where $x\in X$ and $u\in X^{c}$.
\end{proposition}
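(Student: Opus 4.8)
The plan is to prove the biconditional by appealing directly to the characterization of the closure operator established in Proposition~\ref{P:closureofrelationmatroid}, namely $cl_{M(R)}(X)=\{u\in U:\exists x\in X, RS_{R}(x)=RS_{R}(u)\}$. The key structural observation is that this set always contains $X$ itself, since every $x\in X$ trivially satisfies $RS_{R}(x)=RS_{R}(x)$. Thus the closure decomposes as $cl_{M(R)}(X)=X\cup\{u\in X^{c}:\exists x\in X, RS_{R}(x)=RS_{R}(u)\}$, and the condition $cl_{M(R)}(X)=X$ is equivalent to the vanishing of the second piece, i.e.\ to the statement that no element of $X^{c}$ shares a successor neighborhood with any element of $X$.

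First I would establish the forward direction. Assuming $cl_{M(R)}(X)=X$, I would argue by contradiction: if there were some $u\in X^{c}$ and $x\in X$ with $RS_{R}(x)=RS_{R}(u)$, then by Proposition~\ref{P:closureofrelationmatroid} we would have $u\in cl_{M(R)}(X)=X$, contradicting $u\in X^{c}$. Hence $RS_{R}(x)\neq RS_{R}(u)$ for all $x\in X$ and $u\in X^{c}$.

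For the converse direction I would assume that $RS_{R}(x)\neq RS_{R}(u)$ for every $x\in X$ and every $u\in X^{c}$, and show $cl_{M(R)}(X)=X$. The inclusion $X\subseteq cl_{M(R)}(X)$ is immediate (either from the decomposition above, or from the closure axiom (CL1) in Proposition~\ref{P:closureaxioms}). For the reverse inclusion, take any $u\in cl_{M(R)}(X)$; by Proposition~\ref{P:closureofrelationmatroid} there exists $x\in X$ with $RS_{R}(x)=RS_{R}(u)$. If $u$ were in $X^{c}$, this would contradict the hypothesis, so $u\in X$. Therefore $cl_{M(R)}(X)\subseteq X$, and combined with the reverse inclusion we obtain equality.

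I do not anticipate a genuine obstacle here, since the heavy lifting is already done in Proposition~\ref{P:closureofrelationmatroid}; the result is essentially an unpacking of that characterization together with the trivial containment of $X$ in its closure. The only point requiring mild care is keeping the quantifiers straight when passing between the two directions and being explicit that the condition is stated over all pairs $(x,u)$ with $x\in X$ and $u\in X^{c}$, which matches precisely the negation of ``some element of $X^{c}$ lies in the closure.''
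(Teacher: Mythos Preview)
Your proposal is correct and follows essentially the same approach as the paper: both invoke Proposition~\ref{P:closureofrelationmatroid}, observe that $X\subseteq cl_{M(R)}(X)$, and reduce the equality $cl_{M(R)}(X)=X$ to the emptiness of $\{u\in X^{c}:\exists x\in X,\ RS_{R}(x)=RS_{R}(u)\}$. The paper compresses this into a single chain of equivalences, whereas you spell out each direction separately, but the content is identical.
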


\begin{proof}
According to Proposition~\ref{P:closureofrelationmatroid}, $cl_{M(R)}(X)=\{u\in U:\exists x\in X, RS_{R}(x)=RS_{R}(u)\}$.
Then, $cl_{M(R)}(X)=X\Leftrightarrow\{u\in X^{c}:\exists x\in X, RS_{R}(x)=RS_{R}(u)\}=\emptyset\Leftrightarrow\forall u\in X^{c}, x\in X, RS_{R}(x)\neq RS_{R}(u)$.
\end{proof}

According to Proposition~\ref{P:IRsatisfiesindependentsets}, any relation can generate a relation matroid.
However, can different relations on a set generate different relation matroids or the same relation matroid?
If different relations generate the same relation matroid, what is the relationship with the relations?
We study these issues as follows.

\begin{theorem}
Let $R_{1}, R_{2}$ be two relations on $U$ and $M(R_{1}), M(R_{2})$ the relation matroids induced by $R_{1}, R_{2}$, respectively.
Then, $M(R_{1})=M(R_{2})$ if and only if $RS_{R_{1}}(x)=RS_{R_{1}}(y)\Leftrightarrow RS_{R_{2}}(x)=RS_{R_{2}}(y)$ for all $x, y\in U$.
\end{theorem}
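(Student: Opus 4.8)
The plan is to reduce the equality of the two matroids to the equality of their circuit families, since a matroid is uniquely determined by its circuits (as recalled just before Definition~\ref{D:circuit}). Thus $M(R_{1})=M(R_{2})$ holds if and only if $\mathbf{C}(M(R_{1}))=\mathbf{C}(M(R_{2}))$. The advantage of working with circuits rather than independent or dependent sets is that, by Proposition~\ref{P:circuitofrelationmatroid}, every circuit of a relation matroid is a two-element set of the form $\{x,y\}$ with $x\neq y$ and $RS_{R}(x)=RS_{R}(y)$; hence the circuit family is completely encoded by a condition on unordered pairs, which is exactly the shape of the right-hand side of the theorem.

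First I would record, via Proposition~\ref{P:circuitofrelationmatroid}, that for $i=1,2$,
\[
\mathbf{C}(M(R_{i}))=\{\{x,y\}:x,y\in U,\ x\neq y,\ RS_{R_{i}}(x)=RS_{R_{i}}(y)\}.
\]
Since both families consist solely of two-element subsets of $U$, the set equality $\mathbf{C}(M(R_{1}))=\mathbf{C}(M(R_{2}))$ is equivalent to the assertion that, for every pair with $x\neq y$, the set $\{x,y\}$ lies in the first family precisely when it lies in the second, i.e. $RS_{R_{1}}(x)=RS_{R_{1}}(y)\Leftrightarrow RS_{R_{2}}(x)=RS_{R_{2}}(y)$.

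Next I would dispatch the two directions. For the forward direction, assuming $M(R_{1})=M(R_{2})$ gives $\mathbf{C}(M(R_{1}))=\mathbf{C}(M(R_{2}))$, and the pairwise equivalence above yields the stated biconditional for all distinct $x,y$. For the converse, assuming the biconditional for all distinct $x,y$ forces the two circuit families to coincide, and the uniqueness of a matroid's circuits then gives $M(R_{1})=M(R_{2})$. Finally, the case $x=y$ in the theorem's quantifier is vacuous, since $RS_{R_{i}}(x)=RS_{R_{i}}(x)$ holds trivially on both sides, so quantifying over all $x,y\in U$ is the same as quantifying over distinct pairs.

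I do not expect a genuine obstacle here: the only point requiring care is the legitimacy of passing between a matroid and its circuits, which is exactly the uniqueness fact cited in the preliminaries, together with the observation that the circuits of a relation matroid are uniformly of size two, so that circuit equality translates faithfully into the pairwise neighborhood condition.
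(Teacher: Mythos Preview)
Your proof is correct. The paper's own proof is a single sentence invoking Definition~\ref{D:defineIR} and Proposition~\ref{P:IRsatisfiesindependentsets}, i.e.\ it works directly with the independent-set description $\mathbf{I}(R)=\{X:\forall x\neq y\in X,\ RS_{R}(x)\neq RS_{R}(y)\}$ rather than with circuits. Your route through Proposition~\ref{P:circuitofrelationmatroid} and the circuit axiomatization is a mild detour compared to just equating the two independent-set families, but it is equally valid and arguably makes the pairwise nature of the condition more transparent; neither approach has any real advantage over the other since both reduce immediately to the same unordered-pair criterion.
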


\begin{proof}
According to Definition~\ref{D:defineIR} and Proposition~\ref{P:IRsatisfiesindependentsets}, it is straightforward.
\end{proof}

\begin{corollary}
Let $R_{1}, R_{2}$ be two relations on $U$ and $M(R_{1}), M(R_{2})$ the relation matroids induced by $R_{1}, R_{2}$, respectively.
Then, $M(R_{1})=M(R_{2})$ if and only if $RS_{R_{1}}(x)\neq RS_{R_{1}}(y)\Leftrightarrow RS_{R_{2}}(x)\neq RS_{R_{2}}(y)$ for all $x, y\in U$.
\end{corollary}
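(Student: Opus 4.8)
The plan is to derive this corollary directly from the preceding theorem, since its condition is merely a logically equivalent restatement of the theorem's condition. The theorem asserts that $M(R_{1})=M(R_{2})$ holds if and only if, for all $x,y\in U$, the biconditional $RS_{R_{1}}(x)=RS_{R_{1}}(y)\Leftrightarrow RS_{R_{2}}(x)=RS_{R_{2}}(y)$ is true. The corollary replaces each equality by the corresponding inequality. So the whole argument reduces to checking that these two families of conditions single out the same pairs of relations $R_{1},R_{2}$.

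First I would fix an arbitrary pair $x,y\in U$ and abbreviate the two statements, letting $P$ stand for $RS_{R_{1}}(x)=RS_{R_{1}}(y)$ and $Q$ stand for $RS_{R_{2}}(x)=RS_{R_{2}}(y)$. The theorem's pointwise condition is then $P\Leftrightarrow Q$, while the corollary's pointwise condition is $\neg P\Leftrightarrow\neg Q$, because $RS_{R_{i}}(x)\neq RS_{R_{i}}(y)$ is by definition exactly the negation of $RS_{R_{i}}(x)=RS_{R_{i}}(y)$. The key step is the elementary propositional tautology that $(P\Leftrightarrow Q)$ holds if and only if $(\neg P\Leftrightarrow\neg Q)$ holds, which shows that the two pointwise conditions are equivalent for this fixed pair.

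Since this equivalence holds for every pair $x,y\in U$, the universal quantifier preserves it, so the theorem's condition (for all $x,y$) holds precisely when the corollary's condition (for all $x,y$) holds. Chaining this with the preceding theorem then gives the chain: $M(R_{1})=M(R_{2})$ iff the theorem's condition holds iff the corollary's condition holds, which is exactly the asserted equivalence. There is no genuine obstacle here; the only point meriting the slightest care is to apply the biconditional--negation tautology \emph{pointwise}, before quantifying over $x,y$, rather than attempting to negate the already-quantified statement, which would not yield the form stated in the corollary.
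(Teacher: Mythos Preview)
Your proposal is correct and matches the paper's approach: the paper states the corollary immediately after the theorem with no proof, implicitly relying on exactly the contraposition tautology $(P\Leftrightarrow Q)\iff(\neg P\Leftrightarrow\neg Q)$ applied pointwise that you spell out. Your remark about applying the tautology before the universal quantifier is a nice clarification, but nothing beyond what the paper intends.
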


In rough sets, lower and upper approximations are important concepts.
Through them, an element of a set is judged to definitely, possibly, or impossibly belong to a subset of the set with respect to the knowledge on the set.
In the following, we investigate the relationships between the upper approximation operator with respect to a relation and the closure operator of the relation matroid induced by the relation.

\begin{proposition}
Let $R$ be a relation on $U$ and $M(R)$ the relation matroid induced by $R$.
For all $X\subseteq U$, if $R$ is reflexive, then $cl_{M(R)}(X)\subseteq H_{R}(X)$.
\end{proposition}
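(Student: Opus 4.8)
The plan is to argue directly from the explicit description of the closure operator provided by Proposition~\ref{P:closureofrelationmatroid} together with the definition of the upper approximation operator $H_{R}$ in Definition~\ref{D:lowerandupperapproximation}. I would fix an arbitrary $u\in cl_{M(R)}(X)$ and show $u\in H_{R}(X)$, which establishes the inclusion $cl_{M(R)}(X)\subseteq H_{R}(X)$.

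First I would unpack the hypothesis. By Proposition~\ref{P:closureofrelationmatroid}, $cl_{M(R)}(X)=\{u\in U:\exists x\in X, RS_{R}(x)=RS_{R}(u)\}$, so from $u\in cl_{M(R)}(X)$ I obtain some $x\in X$ with $RS_{R}(x)=RS_{R}(u)$. The target is to verify $RS_{R}(u)\bigcap X\neq\emptyset$, since this is precisely the membership condition for $H_{R}(X)$.

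The decisive step is to exploit reflexivity. Because $R$ is reflexive, $xRx$ holds, which means $x\in RS_{R}(x)$ by the definition of the successor neighborhood. Using the equality $RS_{R}(x)=RS_{R}(u)$, this yields $x\in RS_{R}(u)$. Since $x$ was chosen inside $X$, the element $x$ itself lies in $RS_{R}(u)\bigcap X$, so this intersection is nonempty and therefore $u\in H_{R}(X)$, completing the argument.

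I do not anticipate a genuine obstacle here: the proof is short, and the only point requiring care is recognizing that reflexivity is exactly what guarantees each element belongs to its own successor neighborhood, thereby supplying the witness $x$ for the nonempty intersection. This is also where the hypothesis is essential; without reflexivity one cannot guarantee $x\in RS_{R}(u)$, and the inclusion may fail.
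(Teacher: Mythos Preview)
Your proposal is correct and follows essentially the same approach as the paper: both invoke Proposition~\ref{P:closureofrelationmatroid} for the explicit description of $cl_{M(R)}(X)$, take an arbitrary $u$ in the closure with witness $x\in X$ satisfying $RS_{R}(x)=RS_{R}(u)$, use reflexivity to get $x\in RS_{R}(x)=RS_{R}(u)$, and conclude $RS_{R}(u)\bigcap X\neq\emptyset$. The arguments are identical in structure and detail.
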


\begin{proof}
According to Definition~\ref{D:lowerandupperapproximation} and Proposition~\ref{P:closureofrelationmatroid}, $H_{R}(X)=\{x\in U:RS_{R}(x)\bigcap X\neq\emptyset\}$ and $cl_{M(R)}(X)=\{u\in U:\exists x\in X, RS_{R}(x)=RS_{R}(u)\}$.
Therefore, we only need to prove $\{u\in U:\exists x\in X, RS_{R}(x)=RS_{R}(u)\}\subseteq\{x\in U:RS_{R}(x)\bigcap X\neq\emptyset\}$.
$R$ is reflexive, then $x\in RS_{R}(x)$ for all $x\in U$.
For all $u\in\{u\in U:\exists x\in X, RS_{R}(x)=RS_{R}(u)\}$, there exists $x\in X$ such that $RS_{R}(x)=RS_{R}(u)$, then $x\in RS_{R}(u)$.
Therefore, $RS_{R}(u)\bigcap X\neq\emptyset$, i.e., $u\in\{x\in U:RS_{R}(x)\bigcap X\neq\emptyset\}$.
So $R$ is reflexive, then $cl_{M(R)}(X)\subseteq H_{R}(X)$.
\end{proof}

When a relation is reflexive, the upper approximation of a subset contains the closure of the subset in the relation matroid induced by the relation.
We will study the conditions that the upper approximation operator and the closure operator are equal to each other.
First, we introduce a lemma.

\begin{lemma}(\cite{Zhu07Generalized})
\label{L:upperandrelation}
Let $R$ be a relation on $U$.
Then,\\
$R$ is reflexive iff $X\subseteq H_{R}(X)$ for any $X\subseteq U$.\\
$R$ is transitive iff $H_{R}(H_{R}(X))\subseteq H_{R}(X)$ for any $X\subseteq U$.
\end{lemma}

\begin{theorem}
\label{T:upperandclosureindcedbyrelation}
Let $R$ be a relation on $U$ and $M(R)$ the relation matroid induced by $R$.
For all $X\subseteq U$, $cl_{M(R)}(X)=H_{R}(X)$ if and only if $R$ is an equivalence relation.
\end{theorem}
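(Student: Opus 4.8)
The plan is to prove both implications by unfolding the two set-theoretic descriptions already established: $H_{R}(X)=\{u\in U:RS_{R}(u)\cap X\neq\emptyset\}$ from Definition~\ref{D:lowerandupperapproximation}, and $cl_{M(R)}(X)=\{u\in U:\exists x\in X, RS_{R}(x)=RS_{R}(u)\}$ from Proposition~\ref{P:closureofrelationmatroid}, and then comparing them element by element.

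For the sufficiency direction, I would assume $R$ is an equivalence relation and first establish the pivotal characterization that, for such an $R$, $RS_{R}(x)=RS_{R}(y)$ holds if and only if $xRy$. The forward part uses reflexivity, since $y\in RS_{R}(y)=RS_{R}(x)$ forces $xRy$; the backward part uses symmetry and transitivity to show that $xRy$ makes the two successor neighborhoods coincide. With this in hand, both operators collapse to the same set: the closure becomes $\{u\in U:\exists x\in X, uRx\}$ directly from the characterization, while $H_{R}(X)$ equals $\{u\in U:\exists x\in X, x\in RS_{R}(u)\}=\{u\in U:\exists x\in X, uRx\}$, giving $cl_{M(R)}(X)=H_{R}(X)$.

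For the necessity direction, I would specialize the hypothesis to singletons. For $X=\{x\}$ one computes $H_{R}(\{x\})=\{u:uRx\}=RP_{R}(x)$ and $cl_{M(R)}(\{x\})=\{u:RS_{R}(u)=RS_{R}(x)\}$, so the assumed equality yields $\{u:RS_{R}(u)=RS_{R}(x)\}=RP_{R}(x)$ for every $x$. Reflexivity is then immediate, because $x$ always belongs to the left-hand set, hence $x\in RP_{R}(x)$, i.e.\ $xRx$. Symmetry and transitivity follow by translating membership through this identity: if $xRy$ then $x\in RP_{R}(y)$, so $RS_{R}(x)=RS_{R}(y)$, and reflexivity gives $x\in RS_{R}(x)=RS_{R}(y)$, i.e.\ $yRx$; likewise $xRy$ and $yRz$ force $RS_{R}(x)=RS_{R}(y)=RS_{R}(z)$, whence $x\in RP_{R}(z)$ and $xRz$. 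Alternatively, reflexivity and transitivity can be extracted via Lemma~\ref{L:upperandrelation} together with the closure axioms of Proposition~\ref{P:closureaxioms}: the equality $cl_{M(R)}=H_{R}$ transports (CL1) into $X\subseteq H_{R}(X)$ and (CL3) into $H_{R}(H_{R}(X))=H_{R}(X)$, yielding reflexivity and transitivity at once.

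I expect the main obstacle to be the sufficiency direction, specifically the characterization $RS_{R}(x)=RS_{R}(y)\Leftrightarrow xRy$, since this is the single place where all three defining properties of an equivalence relation are simultaneously needed. In the necessity direction the only genuinely nonmechanical point is symmetry, as Lemma~\ref{L:upperandrelation} supplies reflexivity and transitivity but says nothing about the symmetric property, so symmetry must be argued directly from the singleton identity as above.
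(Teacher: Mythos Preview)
Your proposal is correct, and the sufficiency direction is essentially the paper's argument made explicit (the paper writes ``similarly'' for $H_{R}(X)\subseteq cl_{M(R)}(X)$, whereas you isolate the key equivalence $RS_{R}(x)=RS_{R}(y)\Leftrightarrow xRy$ up front). The necessity direction, however, is organized differently. The paper derives reflexivity and transitivity exactly as in your alternative route, via (CL1), (CL3) and Lemma~\ref{L:upperandrelation}, but for symmetry it invokes the exchange axiom (CL4): from $y\in cl_{M(R)}(\{x\})\setminus cl_{M(R)}(\emptyset)$ one gets $x\in cl_{M(R)}(\{y\})$, and translating through $cl_{M(R)}=H_{R}$ turns $(y,x)\in R$ into $(x,y)\in R$. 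Your primary argument instead reads all three properties off the singleton identity $\{u:RS_{R}(u)=RS_{R}(x)\}=RP_{R}(x)$ and never touches (CL4); this is more elementary and self-contained, at the cost of needing reflexivity in hand before symmetry and transitivity go through. The paper's route, by contrast, highlights that each of the three equivalence-relation properties corresponds to a separate closure axiom.
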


\begin{proof}
($\Rightarrow$): According to (CL1) of Proposition~\ref{P:closureaxioms}, $X\subseteq cl_{M(R)}(X)$.
And $cl_{M(R)}(X)=H_{R}(X)$, then $X\subseteq H_{R}(X)$.
According to Lemma~\ref{L:upperandrelation}, $R$ is reflexive.\\
For all $X\subseteq U$, $cl_{M(R)}(X)=H_{R}(X)$, then $cl_{M(R)}=H_{R}$.
According to (CL3) of Proposition~\ref{P:closureaxioms}, $cl_{M(R)}(cl_{M(R)}(X))=cl_{M(R)}(X)$, i.e., $H_{R}(H_{R}(X))=H_{R}(X)$.
According to Lemma~\ref{L:upperandrelation}, $R$ is transitive.\\
For all $y\in H_{R}(\{x\})=\{u\in U:RS_{R}(u)\bigcap\{x\}\neq\emptyset\}$, i.e., $RS_{R}(y)\bigcap\{x\}\neq\emptyset$, then $x\in RS_{R}(y)$, i.e., $(y, x)\in R$.
Therefore $y\in H_{R}(\{x\})\Leftrightarrow (y, x)\in R$.
According to Proposition~\ref{P:IRsatisfiesindependentsets}, for any $x\in U$, $\{x\}\in\mathbf{I}(R)$, then $cl_{M(R)}(\emptyset)=\emptyset$.
According to (CL4) of Proposition~\ref{P:closureaxioms}, $y\in H_{R}(\{x\})=cl_{M(R)}(\{x\})=cl_{M(R)}(\emptyset\bigcup\{x\})-cl_{M(R)}(\emptyset)$, then $x\in cl_{M(R)}(\emptyset\bigcup\{y\})=cl_{M(R)}(\{y\})=H_{R}(\{y\})$, i.e., $y\in H_{R}(\{x\})\Rightarrow x\in H_{R}(\{y\})$.
Therefore, $(y, x)\in R\Rightarrow (x, y)\in R$, then $R$ is symmetric.\\
Therefore, if $cl_{M(R)}(X)=H_{R}(X)$, then $R$ is an equivalence relation.\\
($\Leftarrow$): $R$ is an equivalence relation, then generalized rough sets based on relations are degenerated to Pawlak's rough sets.
According to Proposition~\ref{P:closureofrelationmatroid} and Definition~\ref{D:lowerandupperapproximation}, $cl_{M(R)}(X)=\{u\in U:\exists x\in X, RS_{R}(x)=RS_{R}(u)\}$ and $H_{R}(X)=\{x\in U:RS_{R}(x)\bigcap X\neq\emptyset\}$.
For any $u\in cl_{M(R)}(X)$, there exists $x\in X$ such that $x\in RS_{R}(x)=RS_{R}(u)$, then $RS_{R}(u)\bigcap X\neq\emptyset$, i.e., $cl_{M(R)}(X)\subseteq H_{R}(X)$.
Similarly, we can prove $H_{R}(X)\subseteq cl_{M(R)}(X)$.
\end{proof}

\section{Relation induced by a matroid}
\label{S:relationinducedbymatroid}

In Section~\ref{S:matroidinducedbybinaryrelation}, we have discussed a relation induces a matroid.
In this section, we induce a relation from a matroid.

\begin{definition}(\cite{Lai01Matroid})
\label{D:matroidinducerelation}
Let $M=(U, \mathbf{I})$ be a matroid.
We define a relation $R(M)$ on $U$ as follows:
for $x, y\in U$,
\begin{center}
$xR(M)y\Leftrightarrow x=y$ or $\{x, y\}\in\mathbf{C}(M)$.
\end{center}
We say $R(M)$ is a relation on $U$ induced by $M$.
\end{definition}

An example is provided to illustrate that how a matroid induces a relation in the following.

\begin{example}
Let $M=(U, \mathbf{I})$ be a matroid, where $U=\{1, 2, 3\}$ and $\mathbf{I}=\{\emptyset, \{1\}, \{3\}\}$.
Since $\mathbf{C}(M)=\{\{1, 3\}, \{2\}\}$, then $R(M)=\{(1, 1), (2, 2), (3, 3), (1, 3), (3, 1)\}$.
\end{example}

According to Definition~\ref{D:matroidinducerelation}, any matroid can induce a relation.
In fact, the relation induced by a matroid is an equivalence relation.

\begin{proposition}(\cite{Lai01Matroid})
\label{P:equivalencerelation}
Let $M=(U, \mathbf{I})$ be a matroid.
Then $R(M)$ is an equivalence relation.
\end{proposition}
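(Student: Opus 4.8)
The plan is to verify the three defining conditions of an equivalence relation (reflexive, symmetric, transitive, as in the definition of equivalence relation) for $R(M)$ one at a time. Reflexivity and symmetry are immediate from the shape of Definition~\ref{D:matroidinducerelation}: for every $x\in U$ we have $x=x$, so $xR(M)x$; and if $xR(M)y$, then either $x=y$ (whence $y=x$) or $\{x,y\}\in\mathbf{C}(M)$, and since $\{x,y\}=\{y,x\}$ this gives $\{y,x\}\in\mathbf{C}(M)$, i.e. $yR(M)x$. Thus essentially all the content of the proposition lies in transitivity.

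For transitivity I would assume $xR(M)y$ and $yR(M)z$ and aim for $xR(M)z$. The subcases in which two of $x,y,z$ coincide reduce to reflexivity and symmetry, so the only genuine case is $x,y,z$ pairwise distinct with $\{x,y\},\{y,z\}\in\mathbf{C}(M)$, where the goal becomes $\{x,z\}\in\mathbf{C}(M)$. First I would record a rank characterization of two-element circuits: because $\{x,y\}$ is a minimal dependent set, its proper subsets $\{x\}$ and $\{y\}$ are independent, so Proposition~\ref{P:rankdetermineamatroid} gives $r_{M}(\{x\})=r_{M}(\{y\})=1$, while $\{x,y\}\notin\mathbf{I}$ together with $\{x\}\subseteq\{x,y\}$ forces $r_{M}(\{x,y\})=1$ (it is below $2$ yet at least $r_{M}(\{x\})=1$); the same holds for $\{y,z\}$. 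In particular none of $x,y,z$ is a loop, which is precisely what keeps the argument among genuine two-element circuits.

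Next I would pass to the closure operator. From $r_{M}(\{x,y\})=r_{M}(\{x\})$ and Definition~\ref{D:closure} I get $y\in cl_{M}(\{x\})$, and likewise $z\in cl_{M}(\{y\})$. Using (CL1)--(CL3) of Proposition~\ref{P:closureaxioms} I would chain these: $\{x,y\}\subseteq cl_{M}(\{x\})$ yields $cl_{M}(\{y\})\subseteq cl_{M}(cl_{M}(\{x\}))=cl_{M}(\{x\})$, so $z\in cl_{M}(\{y\})\subseteq cl_{M}(\{x\})$. Hence $r_{M}(\{x,z\})=r_{M}(\{x\})=1<2$, so $\{x,z\}$ is dependent; combined with $\{x\}$ and $\{z\}$ being independent, $\{x,z\}$ is a minimal dependent set, i.e. $\{x,z\}\in\mathbf{C}(M)$, giving $xR(M)z$.

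The main obstacle is exactly this transitivity step, whose crux is showing that two two-element circuits sharing an element force the remaining pair to be a circuit as well. The rank/closure route above stays inside the machinery already developed in the excerpt; an alternative and arguably slicker route is the circuit elimination axiom of matroid theory (\cite{Lai01Matroid}): applied to the distinct circuits $\{x,y\}$ and $\{y,z\}$ at the common element $y$, it produces a circuit contained in $\{x,z\}$, which cannot be $\{x\}$ or $\{z\}$ (both independent) and so equals $\{x,z\}$. I would present the rank/closure version to remain self-contained, flagging the exclusion of the singleton cases (equivalently, the absence of loops among $x,y,z$) as the one point that genuinely needs care.
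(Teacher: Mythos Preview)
Your argument is correct. Reflexivity and symmetry are indeed immediate, and your transitivity proof via ranks and closure is sound: from $\{x,y\},\{y,z\}\in\mathbf{C}(M)$ with $x,y,z$ pairwise distinct you correctly extract $r_{M}(\{x\})=r_{M}(\{y\})=r_{M}(\{z\})=1$ and $r_{M}(\{x,y\})=r_{M}(\{y,z\})=1$, then chain $z\in cl_{M}(\{y\})\subseteq cl_{M}(cl_{M}(\{x\}))=cl_{M}(\{x\})$ to get $r_{M}(\{x,z\})=1$, whence $\{x,z\}$ is dependent with independent singletons and therefore a circuit. The case analysis disposing of coincident elements is also handled correctly.

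As for comparison: the paper does not supply its own proof of this proposition at all; it is stated with a bare citation to \cite{Lai01Matroid}. The textbook argument behind that citation is essentially the circuit-elimination route you sketch at the end (two distinct circuits $\{x,y\}$ and $\{y,z\}$ sharing $y$ yield a circuit inside $\{x,z\}$, which must be $\{x,z\}$ itself since the singletons are independent). Your rank/closure version is a legitimate and self-contained alternative that stays within the machinery the paper has already introduced; the circuit-elimination version is shorter but relies on an axiom the paper has not explicitly stated. Either presentation would be acceptable here.
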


The relationship between the upper approximation of any element in the relation induced by a matroid and the closure of the element in the matroid is studied.

\begin{proposition}
\label{P:oneelement}
Let $M=(U, \mathbf{I})$ be a matroid and $R(M)$ the relation induced by $M$.
Then, $H_{R(M)}(\{x\})\subseteq cl_{M}(\{x\})$ for all $x\in U$.
\end{proposition}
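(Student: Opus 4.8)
The plan is to peel off both operators down to an element-level condition and then use the dichotomy built into Definition~\ref{D:matroidinducerelation}. First I would fix $x\in U$ and take an arbitrary $z\in H_{R(M)}(\{x\})$. By Definition~\ref{D:lowerandupperapproximation} this means $RS_{R(M)}(z)\cap\{x\}\neq\emptyset$, i.e. $x\in RS_{R(M)}(z)$, which is precisely $zR(M)x$. Applying Definition~\ref{D:matroidinducerelation} then yields the dichotomy $z=x$ or $\{z,x\}\in\mathbf{C}(M)$, and the goal reduces to showing $z\in cl_{M}(\{x\})$ in each of these two cases. Note that symmetry of $R(M)$ (Proposition~\ref{P:equivalencerelation}) is not actually needed for this direction.

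In the first case, $z=x$, I would simply invoke (CL1) of Proposition~\ref{P:closureaxioms}, which gives $\{x\}\subseteq cl_{M}(\{x\})$ and hence $z\in cl_{M}(\{x\})$. The second case carries the real content. If $\{z,x\}\in\mathbf{C}(M)$ with $z\neq x$, then $\{z,x\}$ is a minimal dependent set by Definition~\ref{D:circuit}, so each of its proper subsets is independent; in particular $\{x\}\in\mathbf{I}$, whence $r_{M}(\{x\})=1$ by Proposition~\ref{P:rankdetermineamatroid}. On the other hand $\{z,x\}$ is itself dependent, so $r_{M}(\{z,x\})\le 1$, while $\{x\}$ is an independent subset of $\{z,x\}$ of size one, so $r_{M}(\{z,x\})\ge 1$ directly from Definition~\ref{D:rankfunction}. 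Therefore $r_{M}(\{z,x\})=1=r_{M}(\{x\})$, and by Definition~\ref{D:closure} the equality $r_{M}(\{x\})=r_{M}(\{x\}\cup\{z\})$ is exactly the statement $z\in cl_{M}(\{x\})$.

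Collecting the two cases shows $z\in cl_{M}(\{x\})$ for every $z\in H_{R(M)}(\{x\})$, which is the desired inclusion. I do not anticipate a genuine obstacle: the only step requiring a moment of care is recognizing that a two-element circuit pins both $r_{M}(\{x\})$ and $r_{M}(\{z,x\})$ to the common value $1$, after which the closure condition is immediate. It is worth remarking that the inclusion is generally strict, since any loop $u$ (an element with $\{u\}\in\mathbf{C}(M)$) satisfies $r_{M}(\{x\}\cup\{u\})=r_{M}(\{x\})$ and hence lies in $cl_{M}(\{x\})$, yet forms no two-element circuit with $x$ and so escapes $H_{R(M)}(\{x\})$; this asymmetry is exactly what the later study of equality conditions will have to control.
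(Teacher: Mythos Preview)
Your argument is correct and follows the same overall line as the paper: unwind $H_{R(M)}(\{x\})$ to the relational condition $zR(M)x$, invoke Definition~\ref{D:matroidinducerelation}, and conclude $z\in cl_{M}(\{x\})$. The difference is one of completeness rather than strategy. The paper's proof collapses the dichotomy, writing only $yR(M)x\Rightarrow\{y,x\}\in\mathbf{C}(M)$ and then asserting $y\in cl_{M}(\{x\})$ without justification (in effect leaning on the circuit description of closure in Lemma~\ref{L:closureandcircuit}, which is only stated afterwards). You instead treat the $z=x$ branch separately via (CL1) and, for the two-element circuit branch, give a self-contained rank computation showing $r_{M}(\{x\})=r_{M}(\{x\}\cup\{z\})=1$. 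That makes your proof strictly more rigorous while remaining no longer than necessary; your closing remark about loops correctly anticipates the content of the subsequent proposition identifying $cl_{M}(\{x\})=H_{R(M)}(\{x\})\cup\{y:\{y\}\in\mathbf{C}(M)\}$.
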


\begin{proof}
For all $y\in H_{R(M)}(\{x\})=\{y\in U:RS_{R(M)}(y)\bigcap\{x\}\neq\emptyset\}$, where $RS_{R(M)}(y)=\{z:yR(M)z\}$.
Therefore, $x\in RS_{R(M)}(y)$, i.e., $yR(M)x$.
According to Definition~\ref{D:matroidinducerelation}, $yR(M)x\Rightarrow\{y, x\}\in\mathbf{C}(M)$, then $y\in cl_{M}(\{x\})$.
Therefore, $H_{R(M)}(\{x\})\subseteq cl_{M}(\{x\})$ for all $x\in U$.
\end{proof}

According to Proposition~\ref{P:equivalencerelation}, the relation induced by a matroid is an equivalence relation.
The closure of an element in a matroid contains the upper approximation of the element with respect to the relation induced by the matroid.
We will study the conditions under which they are equal to each other.
In order to solve this issue, we introduce a lemma.

\begin{lemma}(\cite{Lai01Matroid})
\label{L:closureandcircuit}
Let $M=(U, \mathbf{I})$ be a matroid.
Then, $cl_{M}(X)=X\bigcup\{u\in U:\exists C\in\mathbf{C}(M),$ s.t. $u\in C\subseteq X\bigcup\{u\}\}$ for all $X\subseteq U$.
\end{lemma}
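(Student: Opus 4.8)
The plan is to prove the claimed set equality by establishing the two inclusions separately, using the rank-based definition of $cl_M$ in Definition~\ref{D:closure} throughout. Write $S=X\bigcup\{u\in U:\exists C\in\mathbf{C}(M),$ s.t. $u\in C\subseteq X\bigcup\{u\}\}$ for the right-hand side. Since $X\subseteq cl_M(X)$ by (CL1) of Proposition~\ref{P:closureaxioms}, and $X\subseteq S$ by construction, in each inclusion it suffices to handle an element $u\notin X$; for such $u$, membership in $cl_M(X)$ means exactly $r_M(X\bigcup\{u\})=r_M(X)$, and membership in $S$ means exactly that some circuit $C$ satisfies $u\in C\subseteq X\bigcup\{u\}$. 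The whole proof is therefore a translation between these two conditions.

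First I would prove $S\subseteq cl_M(X)$. Given $u\notin X$ and a circuit $C$ with $u\in C\subseteq X\bigcup\{u\}$, note $C-\{u\}\subseteq X$, and since $C$ is a \emph{minimal} dependent set (Definition~\ref{D:circuit}) the proper subset $C-\{u\}$ is independent. Using (I3) I extend $C-\{u\}$ to a maximum-size independent subset $B$ of $X$, so $r_M(X)=|B|$ by Definition~\ref{D:rankfunction}. The key observation is that $B\bigcup\{u\}$ is dependent: otherwise $C=(C-\{u\})\bigcup\{u\}\subseteq B\bigcup\{u\}$ would be independent by (I2), contradicting that the circuit $C$ is dependent. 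From this I conclude $r_M(X\bigcup\{u\})=|B|=r_M(X)$, hence $u\in cl_M(X)$.

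Next I would prove $cl_M(X)\subseteq S$. Given $u\in cl_M(X)$ with $u\notin X$, so $r_M(X\bigcup\{u\})=r_M(X)$, fix a maximum-size independent subset $B$ of $X$, so $|B|=r_M(X)$. Then $B\bigcup\{u\}$ must be dependent, since an independent $B\bigcup\{u\}$ would give $r_M(X\bigcup\{u\})\ge|B|+1>r_M(X)$. Being a finite dependent set, $B\bigcup\{u\}$ contains a circuit $C$; and because $B$ itself is independent it contains no circuit, forcing $u\in C$. As $C\subseteq B\bigcup\{u\}\subseteq X\bigcup\{u\}$, the element $u$ meets the defining condition of $S$, so $u\in S$.

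The step I expect to be the main obstacle is, in both directions, the passage between ``$B\bigcup\{u\}$ is dependent'' and ``$r_M(X\bigcup\{u\})=r_M(X)$''. The forward use (dependence forcing the rank not to grow) is the delicate one: if the rank did increase there would be an independent subset of $X\bigcup\{u\}$ strictly larger than $B$, and applying (I3) to augment $B$ from it, the augmenting element could only be $u$ --- any element of $X$ would contradict the maximality of $B$ in $X$ --- which would make $B\bigcup\{u\}$ independent, a contradiction. This augmentation argument, together with the minimality built into Definition~\ref{D:circuit} and the hereditary property (I2), is what powers both inclusions, and I would state it carefully once rather than repeat it.
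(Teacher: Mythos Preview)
Your proof is correct and is the standard double-inclusion argument for this classical matroid identity. However, there is nothing to compare against: the paper does not prove Lemma~\ref{L:closureandcircuit} at all --- it is quoted as a known result from the reference~\cite{Lai01Matroid} and is used without proof to derive the proposition that follows it. So your write-up supplies a proof where the paper simply cites one; the argument you give (extend $C-\{u\}$ to a basis $B$ of $X$ via (I3), then use the augmentation property again to show that dependence of $B\cup\{u\}$ forces $r_M(X\cup\{u\})=r_M(X)$, and conversely extract a circuit through $u$ from the dependent set $B\cup\{u\}$) is exactly the textbook proof one would find in a matroid theory reference.
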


\begin{proposition}
Let $M=(U, \mathbf{I})$ be a matroid and $R(M)$ the relation induced by $M$.
Then, $cl_{M}(\{x\})=H_{R(M)}(\{x\})\bigcup\{y\in U:\{y\}\in\mathbf{C}(M)\}$ for all $x\in U$.
\end{proposition}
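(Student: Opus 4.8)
The plan is to evaluate both sides of the claimed identity explicitly in terms of the circuits of $M$ and then verify that the resulting families of elements coincide. For the left-hand side I would apply Lemma~\ref{L:closureandcircuit} with $X=\{x\}$, which gives
\[
cl_{M}(\{x\})=\{x\}\bigcup\{u\in U:\exists C\in\mathbf{C}(M),\ \text{s.t. } u\in C\subseteq\{x,u\}\}.
\]
The heart of the argument is to classify which circuits $C$ can satisfy $u\in C\subseteq\{x,u\}$. Since $C$ is a subset of the (at most) two-element set $\{x,u\}$ that contains $u$, there are only two possibilities: either $C=\{u\}$, i.e.\ $u$ is a loop ($\{u\}\in\mathbf{C}(M)$), or $C=\{x,u\}$ with $u\neq x$. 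Collecting these cases, I would rewrite
\[
cl_{M}(\{x\})=\{x\}\bigcup\{u\in U:\{u\}\in\mathbf{C}(M)\}\bigcup\{u\in U:u\neq x,\ \{x,u\}\in\mathbf{C}(M)\}.
\]

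For the right-hand side I would unwind $H_{R(M)}(\{x\})$ from Definition~\ref{D:lowerandupperapproximation}, namely $H_{R(M)}(\{x\})=\{y\in U:RS_{R(M)}(y)\bigcap\{x\}\neq\emptyset\}=\{y\in U:yR(M)x\}$. Plugging in the induced relation of Definition~\ref{D:matroidinducerelation}, $yR(M)x$ holds iff $y=x$ or $\{x,y\}\in\mathbf{C}(M)$, so
\[
H_{R(M)}(\{x\})=\{x\}\bigcup\{y\in U:y\neq x,\ \{x,y\}\in\mathbf{C}(M)\}.
\]
Taking the union with the loop term $\{y\in U:\{y\}\in\mathbf{C}(M)\}$ then produces exactly the three-part union displayed above for $cl_{M}(\{x\})$, and a mutual-inclusion check closes the proof.

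The main subtlety I expect is the bookkeeping around loops, together with the possibility that $x$ is itself a loop. The essential observation is that a loop $y\neq x$ is never $R(M)$-related to $x$: this would require $\{x,y\}\in\mathbf{C}(M)$, which is impossible by minimality of circuits, since $\{y\}\subsetneq\{x,y\}$ and no circuit properly contains another (Definition~\ref{D:circuit}). Hence $H_{R(M)}(\{x\})$ on its own omits all loops, and this is precisely what the extra term $\{y\in U:\{y\}\in\mathbf{C}(M)\}$ restores. One also has to notice that when $x$ is a loop the $u=x$ branch of the closure formula merely re-contributes $x$, which already lies in the $\{x\}$ term, so nothing is double-counted or lost. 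Beyond this careful case analysis the statement reduces to a routine set equality, so I do not anticipate a genuine obstacle.
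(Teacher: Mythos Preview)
Your approach is essentially identical to the paper's: both apply Lemma~\ref{L:closureandcircuit} at $X=\{x\}$, split the circuits $C\subseteq\{x,u\}$ into the cases $C=\{u\}$ and $C=\{x,u\}$, and then identify the latter two pieces with $H_{R(M)}(\{x\})$ via Definition~\ref{D:matroidinducerelation}. Your write-up is in fact slightly more careful than the paper's, since you keep explicit track of the $u\neq x$ constraint and of the loop case, whereas the paper leaves these implicit.
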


\begin{proof}
According to Lemma~\ref{L:closureandcircuit}, for all $\{x\}\subseteq U$, $cl_{M}(\{x\})=\{x\}\bigcup\{u\in U:\exists C\in\mathbf{C}(M),$ s.t. $u\in C\subseteq \{x, u\}\}=\{x\}\bigcup\{u\in U:\{x, u\}\in\mathbf{C}(M)\}\bigcup\{y\in U:\{y\}\in\mathbf{C}(M)\}$.
And for any $u\in H_{R(M)}(\{x\})=\{u\in U:RS_{R(M)}(u)\bigcap\{x\}\neq\emptyset\}\Leftrightarrow x\in RS_{R(M)}(u)\Leftrightarrow uR(M)x\Leftrightarrow x=u$ or $\{x, u\}\in\mathbf{C}(M)$, i.e., $H_{R(M)}(\{x\})=\{x\}\bigcup\{u\in U:\{x, u\}\in\mathbf{C}(M)\}$.
Therefore, $cl_{M}(\{x\})=H_{R(M)}(\{x\})\bigcup\{y\in U:\{y\}\in\mathbf{C}(M)\}$.
\end{proof}

According to Proposition~\ref{P:oneelement}, the relationship between the upper approximation and the closure of any subset is established.

\begin{corollary}
Let $M=(U, \mathbf{I})$ be a matroid and $X\subseteq U$.
Then $H_{R(M)}(X)\subseteq cl_{M}(X)$.
\end{corollary}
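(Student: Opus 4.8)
The plan is to lift the single-element containment of Proposition~\ref{P:oneelement} to arbitrary subsets by exploiting the monotonicity of both operators together with the additivity of the upper approximation operator over unions. The statement $H_{R(M)}(X)\subseteq cl_M(X)$ is a genuine corollary, so almost all of the work has already been done; the only task is to assemble the pointwise inclusions $H_{R(M)}(\{x\})\subseteq cl_M(\{x\})$ into a single inclusion for the whole set $X$.

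The cleanest route I would take is a direct element chase, which avoids invoking the union decomposition at all. I would fix an arbitrary $y\in H_{R(M)}(X)$. By Definition~\ref{D:lowerandupperapproximation} this means $RS_{R(M)}(y)\cap X\neq\emptyset$, so there exists some $x\in X$ with $x\in RS_{R(M)}(y)$; equivalently, $RS_{R(M)}(y)\cap\{x\}\neq\emptyset$, i.e.\ $y\in H_{R(M)}(\{x\})$. Proposition~\ref{P:oneelement} then gives $y\in cl_M(\{x\})$, and since $\{x\}\subseteq X$, the monotonicity axiom (CL2) of Proposition~\ref{P:closureaxioms} yields $cl_M(\{x\})\subseteq cl_M(X)$, whence $y\in cl_M(X)$. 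As $y$ was arbitrary, this establishes $H_{R(M)}(X)\subseteq cl_M(X)$.

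An equivalent alternative, which I would mention as a sanity check, is to write $X=\bigcup_{x\in X}\{x\}$ and apply property~(2) of the upper approximation operator (additivity over unions) repeatedly—valid because $U$, and hence $X$, is finite—to obtain $H_{R(M)}(X)=\bigcup_{x\in X}H_{R(M)}(\{x\})$. Combining $H_{R(M)}(\{x\})\subseteq cl_M(\{x\})\subseteq cl_M(X)$ for every $x\in X$ and taking the union again gives the same conclusion. I expect no real obstacle here: the proof is a routine combination of Proposition~\ref{P:oneelement} with the monotonicity of the closure operator, and the most delicate point is merely the harmless appeal to finiteness of $U$ when decomposing $H_{R(M)}(X)$ in the second approach—an issue the element-chasing argument sidesteps entirely.
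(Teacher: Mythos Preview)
Your proposal is correct, and your ``sanity check'' alternative is exactly the paper's own proof: decompose $X$ into singletons, use additivity of $H_{R(M)}$ over unions, apply Proposition~\ref{P:oneelement} to each singleton, and then use monotonicity of $cl_M$. Your primary element-chasing argument is just an unrolled version of the same reasoning, so there is no substantive difference in approach.
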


\begin{proof}
According to Proposition~\ref{P:oneelement}, $H_{R(M)}(X)=H_{R(M)}(\bigcup_{x\in X}\{x\})=\bigcup_{x\in X}H_{R(M)}$ $(\{x\})\subseteq\bigcup_{x\in X}cl_{M}$ $(\{x\})\subseteq cl_{M}(X)$.
\end{proof}

Based on the two inductions, a relation induces a relation matroid, and the relation matroid induces an equivalence relation, the connection between the original relation and the induced equivalence relation is built in the following theorem.

\begin{theorem}
\label{T:tworelations}
Let $R$ be a relation on $U$, $M(R)$ the relation matroid induced by $R$ and $R(M(R))$ the relation induced by $M(R)$.
Then $R(M(R))=\{(x, y)\in U\times U:RS_{R}(x)=RS_{R}(y)\}$.
\end{theorem}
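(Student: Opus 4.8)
The plan is to unfold the two inductions in sequence and show that the composite relation $R(M(R))$ coincides with the stated set. First I would apply Definition~\ref{D:matroidinducerelation} to the matroid $M(R)$, which gives, for any $x,y\in U$,
\begin{center}
$xR(M(R))y\Leftrightarrow x=y$ or $\{x,y\}\in\mathbf{C}(M(R))$.
\end{center}
The key is that the circuits of the relation matroid are already explicitly described: by Proposition~\ref{P:circuitofrelationmatroid}, $\mathbf{C}(M(R))=\{\{x,y\}:x,y\in U\bigwedge x\neq y\bigwedge RS_{R}(x)=RS_{R}(y)\}$. So the plan is simply to substitute this characterization into the definition of $R(M(R))$ and collapse the two disjuncts into a single neighborhood condition.

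The main step is the equivalence $xR(M(R))y\Leftrightarrow RS_{R}(x)=RS_{R}(y)$. For the forward direction I would split on the disjunction. If $x=y$, then trivially $RS_{R}(x)=RS_{R}(y)$. If instead $\{x,y\}\in\mathbf{C}(M(R))$, then by Proposition~\ref{P:circuitofrelationmatroid} we have $x\neq y$ and $RS_{R}(x)=RS_{R}(y)$; in either case the neighborhood equality holds. For the converse, suppose $RS_{R}(x)=RS_{R}(y)$. Either $x=y$, in which case the first disjunct holds, or $x\neq y$, in which case the triple $x\neq y$, $RS_{R}(x)=RS_{R}(y)$ places $\{x,y\}$ exactly in the circuit family of Proposition~\ref{P:circuitofrelationmatroid}, so $\{x,y\}\in\mathbf{C}(M(R))$ and the second disjunct holds. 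Thus both disjuncts together are equivalent to the single condition $RS_{R}(x)=RS_{R}(y)$, and the identity
\begin{center}
$R(M(R))=\{(x,y)\in U\times U:RS_{R}(x)=RS_{R}(y)\}$
\end{center}
follows.

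I do not expect a genuine obstacle here, since the work has been done by the earlier results: the characterization of circuits in Proposition~\ref{P:circuitofrelationmatroid} carries all the content. The only subtlety worth flagging is the handling of the diagonal. The circuit description excludes the reflexive pairs (it requires $x\neq y$), whereas the induced relation $R(M)$ builds reflexivity in separately through the clause $x=y$. The neighborhood condition $RS_{R}(x)=RS_{R}(y)$, by contrast, is automatically satisfied when $x=y$, so I must take care that the two framings agree precisely on the diagonal — which they do, as the case analysis above shows. Everything else is a direct substitution, so the argument is short and the reference to Definition~\ref{D:matroidinducerelation} and Proposition~\ref{P:circuitofrelationmatroid} suffices.
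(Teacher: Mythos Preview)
Your proposal is correct and follows essentially the same route as the paper: invoke Definition~\ref{D:matroidinducerelation} for $R(M(R))$, substitute the circuit description from Proposition~\ref{P:circuitofrelationmatroid}, and split on $x=y$ versus $x\neq y$. If anything, your version is slightly more careful, since you argue both inclusions explicitly while the paper's proof only spells out the direction $RS_{R}(x)=RS_{R}(y)\Rightarrow xR(M(R))y$ before asserting equality.
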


\begin{proof}
According to Proposition~\ref{P:circuitofrelationmatroid}, $\mathbf{C}(M(R))=\{\{x, y\}:\forall x, y\in U, x\neq y\Rightarrow RS_{R}(x)=RS_{R}(y)\}$.
According to Definition~\ref{D:matroidinducerelation}, if $x\neq y$ and $\{x, y\}\in\mathbf{C}(M(R))$, then $xR(M(R))y$, i.e., $x\neq y$ and $RS_{R}(x)=RS_{R}(y)$, then $xR(M(R))y$.
If $x=y$, i.e., $RS_{R}(x)=RS_{R}(y)$, then $xR(M(R))y$.
Therefore, if $RS_{R}(x)=RS_{R}(y)$, then $xR(M(R))y$, i.e., $R(M(R))=\{(x, y)\in U\times U:RS_{R}(x)=RS_{R}(y)\}$.
\end{proof}

We illustrate Theorem~\ref{T:tworelations} with the following example.

\begin{example}
As shown in Example~\ref{E:tworelationmatroids}, $RS_{R}(1)=\{1, 2\}, RS_{R}(2)=\{1, 3\}, RS_{R}(3)=\{1, 3\}$.
Then, $RS_{R}(1)=RS_{R}(1), RS_{R}(2)=RS_{R}(2), RS_{R}(3)=RS_{R}(3), RS_{R}(2)=RS_{R}(3)$.
Therefore, $R(M(R))=\{(1, 1), (2, 2), (3, 3), (2, 3), (3, 2)\}$.
\end{example}

As is shown in the above theorem, the induced equivalence relation can be expressed by the original relation.
What is the relationship between the original relation and the induced equivalence relation?

\begin{proposition}
Let $R$ be a relation on $U$, $M(R)$ the relation matroid induced by $R$ and $R(M(R))$ the relation induced by $M(R)$.
If $R$ is reflexive, then $R(M(R))\subseteq R$.
\end{proposition}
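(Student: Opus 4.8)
The plan is to reduce everything to the explicit description of $R(M(R))$ already obtained in Theorem~\ref{T:tworelations}, namely $R(M(R))=\{(x,y)\in U\times U:RS_{R}(x)=RS_{R}(y)\}$. Given this, proving the containment $R(M(R))\subseteq R$ amounts to showing that the condition $RS_{R}(x)=RS_{R}(y)$ forces $(x,y)\in R$ whenever $R$ is reflexive. So I would start by fixing an arbitrary pair $(x,y)\in R(M(R))$ and invoking Theorem~\ref{T:tworelations} to translate this into the neighborhood equality $RS_{R}(x)=RS_{R}(y)$.

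The key idea is to exploit reflexivity on the element $y$ so as to populate its own successor neighborhood and then transport this membership across the equality of neighborhoods. Concretely, since $R$ is reflexive we have $yRy$, which by the definition of the successor neighborhood means $y\in RS_{R}(y)$. Using the equality $RS_{R}(x)=RS_{R}(y)$ from the previous step, I would conclude $y\in RS_{R}(x)$. Unfolding the definition of $RS_{R}(x)=\{z\in U:xRz\}$ once more then yields $xRy$, that is, $(x,y)\in R$. Since $(x,y)$ was arbitrary, this establishes $R(M(R))\subseteq R$.

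I do not expect a genuine obstacle here: the argument is a short direct chain, and the only conceptual point is recognizing that reflexivity should be applied to $y$ (rather than $x$) in order to deduce $xRy$ from the equality of the two successor neighborhoods. Applying it to $x$ instead would only recover the trivial fact $x\in RS_{R}(x)=RS_{R}(y)$, i.e. $yRx$, which is not what we need; the asymmetry between the successor neighborhood and the relation itself is precisely what makes the choice of $y$ the correct one. Everything else is a routine substitution using Theorem~\ref{T:tworelations} and the definition of the successor neighborhood, so no additional lemmas beyond these are required.
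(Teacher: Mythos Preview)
Your proposal is correct and follows essentially the same approach as the paper's proof: both invoke Theorem~\ref{T:tworelations} to obtain $R(M(R))=\{(x,y):RS_{R}(x)=RS_{R}(y)\}$, then use reflexivity at $y$ to get $y\in RS_{R}(y)=RS_{R}(x)$ and hence $(x,y)\in R$. Your added remark about why reflexivity should be applied to $y$ rather than $x$ makes the logic more explicit than in the paper, but the underlying argument is identical.
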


\begin{proof}
According to Theorem~\ref{T:tworelations}, $R(M(R))=\{(x, y)\in U\times U:RS_{R}(x)=RS_{R}(y)\}$.
$R$ is reflexive, $x\in RS_{R}(x)$ for all $x\in U$.
For all $(x, y)\in R(M(R))$, $RS_{R}(x)=RS_{R}(y)$.
$R$ is reflexive, then $y\in RS_{R}(x)$, i.e., $(x,y)\in R$.
Therefore, $R(M(R))\subseteq R$.
\end{proof}

We see the relation induced by a matroid is an equivalence relation.
When a relation is an equivalence relation, what is the relationship between it and the relation induced by its induced relation matroid?

\begin{proposition}
Let $R$ be a relation on $U$, $M(R)$ the relation matroid induced by $R$ and $R(M(R))$ the relation induced by $M(R)$.
Then $R(M(R))=R$ if and only if $R$ is an equivalence relation.
\end{proposition}

\begin{proof}
According to Proposition~\ref{P:equivalencerelation} and Theorem~\ref{T:tworelations}, it is straightforward.
\end{proof}

\section{Conclusions}
\label{S:conclusions}

In this paper, we proposed a pair of matroids called predecessor and successor relation matroids through neighborhoods from generalized rough sets based on relations.
Predecessor relation matroids and successor relation matroids can be expressed by each other, so we only investigated successor relation matroids.
Successor relation matroid is called relation matroid for brevity.
We investigate some characteristics of the relation matroid induced by a relation, such as dependent sets, circuits, the rank function and the closure operator.
And the relationship between the upper approximation operator of a relation and the closure operator of the relation matroid induced by the relation is studied.
We also induced a relation from a matroid, and proved that the induced relation was always an equivalence relation.
The closure operator of a matroid was compared with the upper approximation operator of the relation induced by the matroid.
Moreover, based on these two inductions, a relation induced a relation matroid, and the relation matroid induced an equivalence relation, we  proved that the induced equivalence relation can be expressed by the original relation.
This study provided a simple however effective connection between generalized rough sets based on relations and matroids.

\section{Acknowledgments}
This work is in part supported by National Science Foundation of China under Grant No. 60873077, 61170128 and the Natural Science Foundation of
Fujian Province, China under Grant No. 2011J01374.



\end{document}